\newcommand{\real}{\mathbb{R}}
\newcommand{\one}{\mathbbm{1}}
\newcommand{\hsp}{\hspace{0.2mm}}
\newcommand{\cl}{{\rm cl}} 
\newcommand{\rk}{{\rm rk}} 
\newcommand{\mrk}{\overline{\rk}} 
\newcommand{\cov}{{\rm cov}} 
\newcommand{\AUC}{{\rm AUC}} 
\newcommand{\CPA}{{\rm CPA}} 
\newcommand{\ROC}{{\rm ROC}} 
\newcommand{\cC}{{\cal C}} 
\newcommand{\tauK}{\tau_{\scriptstyle \hsp \rm K}}  
\newcommand{\rhoM}{\rho_{\scriptstyle \rm M}}  
\newcommand{\rhoS}{\rho_{\scriptstyle \rm S}}  
\newcommand{\Upairs}{U_{\scriptstyle \rm pairs}}  
\newcommand{\Uovo}{U_{\scriptstyle \rm ovo}}  
\newcommand{\Ucons}{U_{\scriptstyle \rm cons}}  
\newtheorem{theorem}{Theorem}
\newtheorem{definition}{Definition}
\title{Receiver operating characteristic (ROC) movies, universal ROC
	(UROC) curves, and coefficient of predictive ability
	(CPA)}
\author{Tilmann Gneiting$^{1,2}$ and Eva-Maria Walz$^{2,1}$
	\vspace{0.5cm}\\
$^{1}$Heidelberg Institute for Theoretical Studies (HITS), Germany \\
$^{2}$Institute for Stochastics, Karlsruhe Institute of Technology (KIT), Germany}
\begin{document}

\maketitle

\begin{abstract}
Throughout science and technology, receiver operating characteristic
(ROC) curves and associated area under the curve ($\AUC$) measures
constitute powerful tools for assessing the predictive abilities of
features, markers and tests in binary classification problems.
Despite its immense popularity, ROC analysis has been subject to a
fundamental restriction, in that it applies to dichotomous (yes or no)
outcomes only.  Here we introduce ROC movies and universal ROC (UROC)
curves that apply to just any linearly ordered outcome, along with an
associated coefficient of predictive ability ($\CPA$) measure.  $\CPA$
equals the area under the UROC curve, and admits appealing
interpretations in terms of probabilities and rank based covariances.
For binary outcomes $\CPA$ equals $\AUC$, and for pairwise distinct
outcomes $\CPA$ relates linearly to Spearman's coefficient, in the
same way that the C index relates linearly to Kendall's coefficient.
ROC movies, UROC curves, and $\CPA$ nest and generalize the tools of
classical ROC analysis, and are bound to supersede them in a wealth of
applications.  Their usage is illustrated in data examples from
biomedicine and meteorology, where rank based measures yield new
insights in the WeatherBench comparison of the predictive performance
of convolutional neural networks and physical-numerical models for
weather prediction.
\vspace{0.5cm}\\
\textit{Keywords}: C index; classification and regression; evaluation
	metric; rank correlation; coefficient; ROC analysis
\end{abstract}

\section{Introduction}  \label{sec:introduction}

Originating from signal processing and psychology, popularized in the
1980s \citep{Hanley1982, Swets1988}, and witnessing a surge of usage
in machine learning \citep{Bradley1997, Huang2005, Fawcett2006,
	Flach2016}, receiver operating characteristic or relative operating
characteristic (ROC) curves and area under the ROC curve ($\AUC$)
measures belong to the most widely used quantitative tools in science
and technology.  Strikingly, a Web of Science topic search for the
terms ``receiver operating characteristic'' or ``ROC'' yields well
over 15,000 scientific papers published in calendar year 2019 alone.
In a nutshell, the ROC curve quantifies the potential value of a
real-valued classifier score, feature, marker, or test as a predictor
of a binary outcome.  To give a classical example,
Fig.~\ref{fig:ROCC_survival} illustrates the initial levels of two
biomedical markers, serum albumin and serum bilirubin, in a Mayo
Clinic trial on primary biliary cirrhosis (PBC), a chronic fatal
disease of the liver \citep{Dickson1989}.  While patient records
specify the duration of survival in days, traditional ROC analysis
mandates the reduction of the outcome to a binary event, which here we
take as survival beyond four years.  Assuming that higher marker
values are more indicative of survival, we can take any threshold
value to predict survival if the marker exceeds the threshold, and
non-survival otherwise.  This type of binary classifier yields true
positives, false positives (erroneous predictions of survival), true
negatives, and false negatives (erroneous predictions of
non-survival).  The ROC curve is the piecewise linear curve that plots
the true positive rate, or sensitivity, versus the false positive
rate, or one minus the specificity, as the threshold for the
classifier moves through all possible values.

Despite its popularity, ROC analysis has been subject to a fundamental
shortcoming, namely, the restriction to binary outcomes.  Real-valued
outcomes are ubiquitous in scientific practice, and investigators have
been forced to artificially make them binary if the tools of ROC
analysis are to be applied.  In this light, researchers have been
seeking generalizations of ROC analysis that apply to just any type of
ordinal or real-valued outcomes in natural ways \citep{Etzioni1999,
  Heagerty2000, Bi2003, Pencina2004, Heagerty2005, Rosset2005,
  Mason2009, Orallo2013}.  Still, notwithstanding decades of
scientific endeavor, a fully satisfactory generalization has been
elusive.

\begin{figure}[t]
\centering
\includegraphics[width = \textwidth]{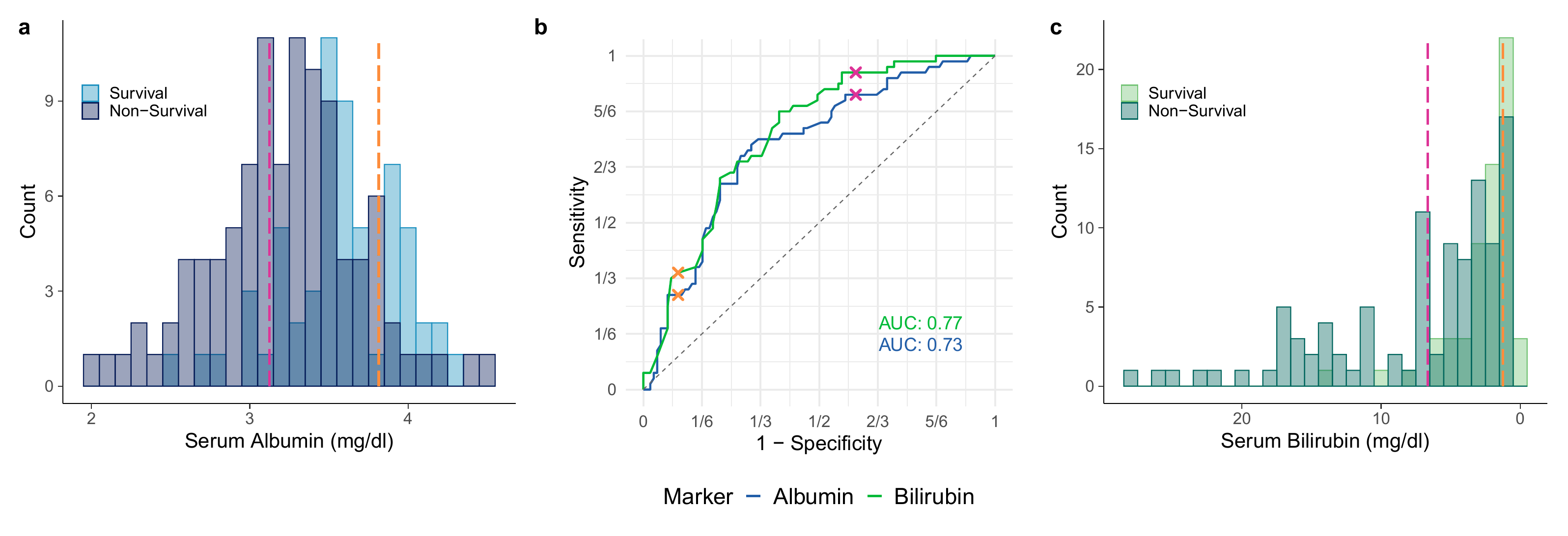}
\caption{Traditional ROC curves for two biomedical markers, serum
  albumin and serum bilirubin, as predictors of patient survival
  beyond a threshold value of 1462 days (four years) in a Mayo Clinic
  trial.  (a, c) Bar plots of marker levels conditional on survival or
  non-survival.  The stronger shading results from overlap.  For
  bilirubin, we reverse orientation, as is customary in the biomedical
  literature.  (b) ROC curves and $\AUC$ values.  The crosses
  correspond to binary classifiers at the feature thresholds indicated
  in the bar plots.  \label{fig:ROCC_survival}}
\end{figure}

In this paper, we propose a powerful generalization of ROC analysis,
which overcomes extant shortcomings, and introduce data science tools
in the form of the ROC movie, the universal ROC (UROC) curve, and an
associated, rank based coefficient of (potential) predictive ability
($\CPA$) measure — tools that apply to just any linearly ordered
outcome, including both binary, ordinal, mixed discrete-continuous,
and continuous variables.  The ROC movie comprises the sequence of the
traditional, static ROC curves as the linearly ordered outcome is
converted to a binary variable at successively higher thresholds.  The
UROC curve is a weighted average of the individual ROC curves that
constitute the ROC movie, with weights that depend on the class
configuration, as induced by the unique values of the outcome, in
judiciously predicated, well-defined ways.  $\CPA$ is a weighted
average of the individual $\AUC$ values in the very same way that the
UROC curve is a weighted average of the individual ROC curves that
constitute the ROC movie.  Hence, $\CPA$ equals the area under the
UROC curve.  This set of generalized tools reduces to the standard ROC
curve and $\AUC$ when applied to binary outcomes.  Moreover, key
properties and relations from conventional ROC theory extend to ROC
movies, UROC curves, and $\CPA$ in meaningful ways, to result in a
coherent toolbox that properly extends the standard ROC concept.  For
a graphical preview, we return to the survival data example from
Fig.~\ref{fig:ROCC_survival}, where the outcome was artificially made
binary.  Equipped with the new set of tools we no longer need to
transform survival time into a specific dichotomous outcome.  Figure
\ref{fig:ROCM_survival} shows the ROC movie, the UROC curve, and
$\CPA$ for the survival dataset.

The remainder of the paper is organized as follows.  Section
\ref{sec:binary} provides a brief review of conventional ROC analysis
for dichotomous outcomes.  The key technical development is in
Sections \ref{sec:ROC_movies} and \ref{sec:CPA}, where we introduce
and study ROC movies, UROC curves, and the rank based $\CPA$ measure.
To illustrate practical usage and relevance, real data examples from
survival analysis and weather prediction are presented in Section
\ref{sec:examples}.  We monitor recent progress in numerical weather
prediction (NWP) and shed new light on a recent comparison of the
predictive abilities of convolutional neural networks (CNNs)
vs.~traditional NWP models.  The paper closes with a discussion in
Section \ref{sec:discussion}.

\begin{figure}[t]
\centering
\animategraphics[label = myAnim7, height = 70mm]{7}{figures/imgeps4_out}{}{}
\caption{ROC movies, UROC curves, and $\CPA$ for two biomedical
  markers, serum albumin and serum bilirubin, as predictors of patient
  survival (in days) in a Mayo Clinic trial.  The ROC movies show the
  traditional ROC curves for binary events that correspond to patient
  survival beyond successively higher thresholds.  The numbers at
  upper left show the current value of the threshold in days, at upper
  middle the respective relative weight, and at bottom right the
  $\AUC$ values.  The threshold value of 1462 days recovers the
  traditional ROC curves in Fig.~\ref{fig:ROCC_survival}.  The video
  ends in a static screen with the UROC curves and $\CPA$ values for
  the two markers.  \label{fig:ROCM_survival}}
\end{figure}

\section{Receiver operating characteristic (ROC) curves and area under the curve (AUC) for binary outcomes}  
\label{sec:binary} 

Before we introduce ROC movies, UROC curves, and $\CPA$, it is
essential that we establish notation and review the classical case of
ROC analysis for binary outcomes, as described in review articles and
monographs by \citet{Hanley1982}, \citet{Swets1988},
\citet{Bradley1997}, \citet{Pepe2003}, \citet{Fawcett2006}, and
\citet{Flach2016}, among others.

\subsection{Binary setting}  \label{sec:setting}

Throughout this section we consider bivariate data of the form 
\begin{equation}  \label{eq:binary} 
(x_1,y_1), \ldots, (x_n,y_n) \, \in \, \real \times \{ 0, 1 \}, 
\end{equation} 
where $x_i \in \real$ is a real-valued classifier score, feature, marker,
or covariate value, and $y_i \in \{ 0, 1 \}$ is a binary outcome, for
$i = 1, \ldots, n$.  Following the extant literature, we refer to $y =
1$ as the positive outcome and to $y = 0$ as the negative outcome, and
we assume that higher values of the feature are indicative of stronger
support for the positive outcome.  Throughout we assume that there is
at least one index $i \in \{ 1, \ldots, n \}$ with $y_i = 0$, and a
further index $j \in \{ 1, \ldots, n \}$ with $y_j = 1$.

\subsection{Receiver operating characteristic (ROC) curves}  \label{sec:ROCC} 

We can use any threshold value $x \in \real$ to obtain a hard
classifier, by predicting a positive outcome for a feature value $>
x$, and predicting a negative outcome for a feature value $\leq x$.
If we compare to the actual outcome, four possibilities arise.  True
positive and true negative cases correspond to correctly classified
instances from class 1 and class 0, respectively.  Similarly, false
positive and false negative cases are misclassified instances from
class 1 and class 0, respectively.

Considering the data \eqref{eq:binary}, we obtain the respective
\textit{true positive rate}, \textit{hit rate}\/ or
\textit{sensitivity}\/ (se),
\[
{\rm se}(x) = 
\frac{\frac{1}{n} \sum_{i=1}^n \one \{ x_i > x, \hsp y_i = 1 \}}{\frac{1}{n} \sum_{i=1}^n \one \{ \hsp y_i = 1 \}}, 
\] 
and the \textit{false negative rate}, \textit{false alarm rate}\/ or
one minus the \textit{specificity}~(sp),
\[
1 - {\rm sp}(x) = 
\frac{\frac{1}{n} \sum_{i=1}^n \one \{ x_i > x, \hsp y_i = 0 \}}{\frac{1}{n} \sum_{i=1}^n \one \{ \hsp y_i = 0 \}}, 
\]
at the threshold value $x \in \real$, where the indicator $\one \{ A
\}$ equals one if the event $A$ is true and zero otherwise.

Evidently, it suffices to consider threshold values $x$ equal to any
of the unique values of $x_1, \ldots, x_n$ or some $x_0 < x_1$.  For
every $x$ of this form, we obtain a point
\[
(1 - {\rm sp}(x), {\rm se}(x))
\]
in the unit square.  Linear interpolation of the respective discrete
point set results in a piecewise linear curve from $(0,0)$ to $(1,1)$
that is called the \textit{receiver operating characteristic}\/ (ROC)
\textit{curve}.  For a mathematically oriented, detailed discussion of
the construction see Section 2 of \citet{Gneiting2018}.

\subsection{Area under the curve (AUC)}  \label{sec:AUC} 

The area under the ROC curve is a widely used measure of the
predictive potential of a feature and generally referred to as the
\textit{area under the curve}\/ ($\AUC$).

In what follows, a well-known interpretation of $\AUC$ in terms of
probabilities will be useful. To this end, we define the function
\begin{equation}  \label{eq:s}
s(x,x') = \one \{ x < x' \} + \frac{1}{2} \one \{ x = x' \}, 
\end{equation}
where $x, x' \in \real$.  For subsequent use, note that if $x$ and
$x'$ are ranked within a list, and ties are resolved by assigning
equal ranks within tied groups, then $s(x,x') = s(\rk(x),\rk(x'))$,
where $\rk(x)$ and $\rk(x')$ are the ranks of $x$ and $x'$.

We now change notation and refer to the feature values in class $i \in
\{ 0, 1 \}$ as $x_{ik}$ for $k = 1, \ldots, n_i$, where $n_0 =
\sum_{i=1}^n \one \{ y_i = 0 \}$ and $n_1 = \sum_{i=1}^n \one \{ y_i =
1 \}$, respectively.  Thus, we have rewritten \eqref{eq:binary} as
\begin{equation}  \label{eq:binary2class} 
(x_{01},0), \ldots, (x_{0n_0},0), (x_{11},1), \ldots, (x_{1n_1},1) \, \in \, \real \times \{ 0, 1 \}.  
\end{equation} 
Using the new notation, Result 4.10 of \citet{Pepe2003} states that
\begin{equation}  \label{eq:MWU}
\AUC = \frac{1}{n_0 n_1} \, \sum_{i=1}^{n_0} \sum_{j = 1}^{n_1} s(x_{0i},x_{1j}).  
\end{equation}
In words, $\AUC$ equals the probability that under random sampling a
feature value from a positive instance is greater than a feature value
from a negative instance, with any ties resolved at random.  Expressed
differently, $\AUC$ equals the tie-adjusted probability of concordance
in feature--outcome pairs, where we define instances $(x,y) \in
\real^2$ and $(x',y') \in \real^2$ with $y \not= y'$ to be
\textit{concordant}\/ if either $x > x'$ and $y > y'$, or $x < x'$ and
$y < y'$.  Similarly, instances $(x,y)$ and $(x',y')$ with $y \not=
y'$ are \textit{discordant}\/ if either $x > x'$ and $y < y'$, or $x <
x'$ and $y > y'$.

Further investigation reveals a close connection to Somers' $D$, a
classical measure of ordinal association \citep{Somers1962}.  This
measure is defined as
\[
D = \frac{n_c - n_d}{n_0 n_1},
\]
where $n_0 n_1$ is the total number of pairs with distinct outcomes
that arise from the data in \eqref{eq:binary2class}, $n_c$ is the
number of concordant pairs, and $n_d$ is the number of discordant
pairs.  Finally, let $n_e$ be the number of pairs for which the
feature values are equal.  The relationship \eqref{eq:MWU} yields
\[
\AUC = \frac{n_c}{n_0 n_1} + \frac{1}{2} \frac{n_e}{n_0 n_1}, 
\]
and as $n_0 n_1 = n_c + n_d + n_e$, it follows that
\begin{equation}  \label{eq:SomersD}
\AUC = \frac{1}{2} \left( D + 1 \right)
\end{equation}
relates linearly to Somers' $D$.  

To give an example, suppose that the real-valued outcome $Y$ and the
features $X$, $X'$ and $X''$ are jointly Gaussian.  Specifically, we
assume that the joint distribution of $(Y, X, X', X'')$ is
multivariate normal with covariance matrix
\begin{align}  \label{eq:MVN}
\begin{pmatrix}
1 & 0.8 & 0.5 & 0.2 \\
0.8 & 1 & 0.8 & 0.5 \\
0.5 & 0.8 & 1 & 0.8 \\
0.2 & 0.5 & 0.8 & 1 \\
\end{pmatrix}
.
\end{align}
In order to apply classical ROC analysis, the real-valued outcome $Y$
needs to be converted to a binary variable, namely, an event of the
type $Y_\theta = \one \{ Y \geq \theta \}$ of $Y$ being greater than
or equal to a threshold value $\theta$.  Figure \ref{fig:ROCC} shows
ROC curves for the features $X$, $X'$ and $X''$ as a predictor of the
binary variable $Y_1$, based on a sample of size $n = 400$.  The
$\AUC$ values for $X$, $X'$ and $X''$ as a predictor of $Y_1$ are .91,
.72 and .61, respectively.

\begin{figure}[t]
\centering
\includegraphics[width = 0.625 \textwidth]{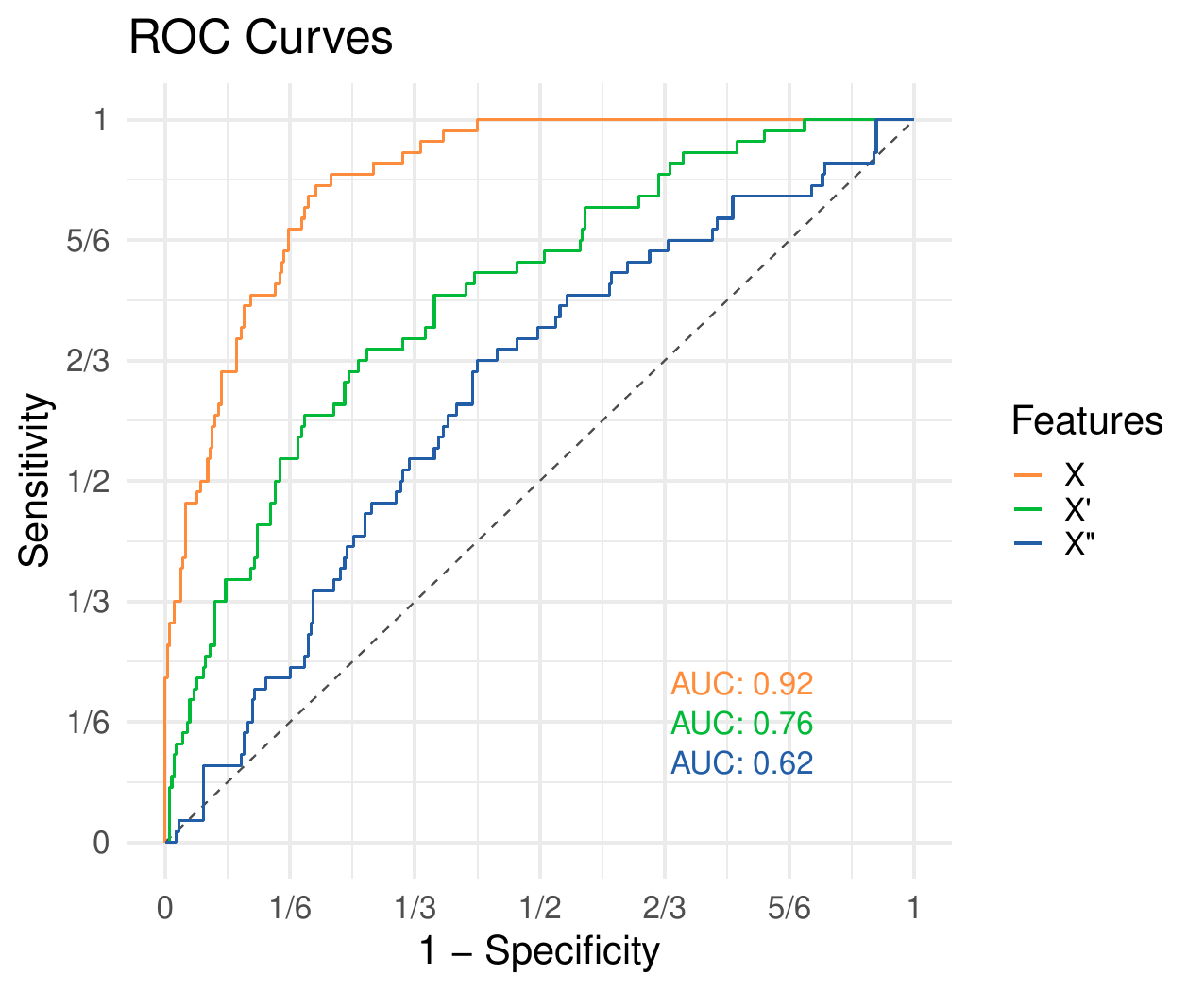}
\caption{Traditional ROC curves and $\AUC$ values for the features
  $X$, $X'$ and $X''$ as predictors of the binary outcome $Y_1 = \one
  \{ Y \geq 1 \}$ in the simulation example of Section \ref{sec:AUC},
  based on a sample of size $n = 400$.  \label{fig:ROCC}}
\end{figure}

\subsection{Key properties}  \label{sec:initialsummary}

A key requirement for a persuasive generalization of classical ROC
analysis is the reduction to ROC curves and $\AUC$ if the outcomes are
binary.  Furthermore, well established desirable properties from ROC
analysis ought to be retained.  To facilitate judging whether the
generalization in Sections \ref{sec:ROC_movies} and \ref{sec:CPA}
satisfies these desiderata, we summarize key properties of ROC curves
and $\AUC$ in the following (slightly informal) listing.
\begin{itemize} 
\item[(1)] The ROC curve and $\AUC$ are straightforward to compute and
  interpret, in the (rough) sense of \textit{the larger the better}.
\item[(2)] $\AUC$ attains values between 0 and 1 and relates linearly
  to Somers' $D$.  For a perfect feature, $\AUC = 1$ and $D = 1$; for
  a feature that is independent of the binary outcome, $\AUC =
  \frac{1}{2}$ und $D = 0$.
\item[(3)] The numerical value of $\AUC$ admits an interpretation as
  the probability of concordance for feature--outcome pairs.
\item[(4)] The ROC curve and $\AUC$ are purely rank based and,
  therefore, invariant under strictly increasing transformations.
  Specifically, if $\phi : \real \to \real$ is a strictly increasing
  function, then the ROC curve and $\AUC$ computed from
  \begin{equation}  \label{eq:binaryT}
  (\phi(x_1),y_1), \ldots, (\phi(x_n),y_n) \, \in \, \real \times \{ 0, 1 \}
  \end{equation} 
  are the same as the ROC curve and $\AUC$ computed from
  \eqref{eq:binary}.
\end{itemize} 
As an immediate consequence of the latter property, ROC curves and
$\AUC$ assess the discrimination ability or \textit{potential}\/
predictive ability of a classifier, feature, marker, or test
\citep{Wilks2019}.  Distinctly different methods are called for if one
seeks to evaluate a classifier's \textit{actual}\/ value in any given
applied setting \citep{Adams1999, Orallo2012, Ehm2016}.

\section{ROC movies and universal ROC (UROC) curves for real-valued outcomes} 
\label{sec:ROC_movies} 

As noted, traditional ROC analysis applies to binary outcomes only.
Thus, researchers working with real-valued outcomes, and desiring to
apply ROC analysis, need to convert and reduce to binary outcomes, by
thresholding artificially at a cut-off value.  Here we propose a
powerful generalization of ROC analysis, which overcomes extant
shortcomings, and introduce data analytic tools in the form of the ROC
movie, the universal ROC (UROC) curve, and an associated rank based
coefficient of (potential) predictive ability ($\CPA$) measure ---
tools that apply to just any linearly ordered outcome, including both
binary, ordinal, mixed discrete-continuous, and continuous variables.

\subsection{General real-valued setting}  \label{sec:real} 

Generalizing the binary setting in \eqref{eq:binary}, we now consider
bivariate data of the form
\begin{equation}  \label{eq:real} 
(x_1,y_1), \ldots, (x_n,y_n) \, \in \, \real \times \real, 
\end{equation} 
where $x_i$ is a real-valued point forecast, regression output,
feature, marker, or covariate value, and $y_i$ is a real-valued
outcome, for $i = 1, \ldots, n$.  Throughout we assume that there are
at least two unique values among the outcomes $y_1, \ldots, y_n$.

The crux of the subsequent development lies in a conversion to a
sequence of binary problems.  To this end, we let
\[
z_1 < \cdots < z_m 
\]
denote the $m \leq n$ unique values of $y_1, \ldots, y_n$, and
we define
\[
n_c = \sum_{i=1}^n \one \{ y_i = z_c \}
\]
as the number of instances among the outcomes $y_1, \ldots, y_n$ that
equal $z_c$, for $c = 1, \ldots, m$, so that $n_1 + \cdots + n_m = n$.
We refer to the respective groups of instances as \textit{classes}.

Next we transform the real-valued outcomes $y_1, \ldots, y_n$ into
binary outcomes $\one \{ y_1 \geq \theta \}, \ldots, \one \{ y_n \geq
\theta \}$ relative to a threshold value $\theta \in \real$.  Thus,
instead of analysing the original problem in \eqref{eq:real}, we
consider a series of binary problems.  By construction, only values of
$\theta$ equal to $z_2, \ldots, z_m$ result in nontrivial, unique sets
of binary outcomes.  Therefore, we consider $m - 1$ derived
classification problems with binary data of the form
\begin{equation}  \label{eq:real2binary}
(x_1, \one \{ y_1 \geq z_{c+1} \}), \ldots, (x_n, \one \{ y_n \geq z_{c+1} \}) 
\, \in \, \real \times \{ 0, 1 \}, 
\end{equation}
where $c = 1, \ldots, m - 1$.  As the derived problems are binary, all
the tools of traditional ROC analysis apply.

In the remainder of the section we describe our generalization of ROC
curves for binary data to ROC movies and universal ROC (UROC) curves
for real-valued data.  First, we argue that the $m - 1$ classical ROC
curves for the derived data in \eqref{eq:real2binary} can be merged
into a single dynamical display, to which we refer as a ROC movie
(Definition \ref{defn:ROCM}).  Then we define the UROC curve as a
judiciously weighted average of the classical ROC curves of which the
ROC movie is composed (Definition \ref{defn:UROC}).

Finally, we introduce a general measure of potential predictive
ability for features, termed the coefficient of predictive ability
(CPA).  $\CPA$ is a weighted average of the $\AUC$ values for the
derived binary problems in the very same way that the UROC curve is a
weighted average of the (classical) ROC curves that constitute the ROC
movie.  Hence, $\CPA$ equals the area under the UROC curve (Definition
\ref{defn:CPA}).  Alternatively, $\CPA$ can be interpreted as a
weighted probability of concordance (Theorem \ref{thm:CPA2L}) or in
terms of rank based covariances (Theorem \ref{thm:CPA2cov}).  $\CPA$
reduces to $\AUC$ if the outcomes are binary, and relates linearly to
Spearman's rank correlation coefficient if the outcomes are continuous
(Theorems \ref{thm:Spearman} and \ref{thm:Spearman.m}).

\subsection{ROC movies}  \label{sec:ROCM} 

We consider the sequence of $m - 1$ classification problems for the
derived binary data in \eqref{eq:real2binary}.  For $c = 1, \ldots, m
- 1$, we let $\ROC_c$ denote the associated ROC curve, and we let
$\AUC_c$ be the respective $\AUC$ value.

\begin{definition}  \label{defn:ROCM} 
For data of the form \eqref{eq:real}, the ROC \textit{movie}\/ is the
sequence $(\ROC_c)_{c = 1, \ldots, m-1}$ of the ROC curves for the
induced binary data in \eqref{eq:real2binary}.
\end{definition}

If the original problem is binary there are $m = 2$ classes only, and
the ROC movie reduces to the classical ROC curve.  In case the outcome
attains $m \geq 3$ distinct values the ROC movie can be visualized by
displaying the associated sequence of $m - 1$ ROC curves.  In medical
survival analysis, the outcomes $y_1, \ldots, y_n$ in data of the form
\eqref{eq:real} are survival times, and the analysis is frequently
hampered by censoring, as patients drop out of studies.  In this
setting, \citet{Etzioni1999} and \citet{Heagerty2000} introduced the
notion of time-dependent ROC curves, which are classical ROC curves
for the binary indicator $\one \{ y_i \geq t \}$ of survival through
(follow-up) time $t$, with censoring being handled efficiently.  For
an example see Fig.~2 of \cite{Heagerty2000}, where the ROC curves
concern survival beyond follow-up times of 40, 60, and 100 months,
respectively.  If the thresholds considered correspond to the unique
values of the outcomes, the sequence of time-dependent ROC curves
becomes a ROC movie in the sense of Definition \ref{defn:ROCM}, save
for the handling of censored data.  When the number $m \leq n$ of
classes is small or modest, the generation of the ROC movie is
straightforward.  Adaptations might be required as $m$ grows, and we
tend to this question in Section \ref{sec:NWP}.

\begin{figure}[t]
\centering
\animategraphics[label = myAnim7, height = 70mm]{28}{figures/imgeps3_reduced}{0}{}
\caption{ROC movies and UROC curves for the features $X$, $X'$ and
  $X''$ as predictors of the real-valued outcome $Y$ in the simulation
  example of Section \ref{sec:AUC}, based on the same sample as in
  Fig.~\ref{fig:ROCC}.  In the ROC movies, the number at upper left
  shows the threshold under consideration, the number at upper center
  the relative weight $w_c/\max_{l = 1, \ldots, m-1} w_l$ from
  \eqref{eq:w}, and the numbers at bottom right the respective $\AUC$
  values.  \label{fig:ROCM}}
\end{figure}

We have implemented ROC movies, UROC curves, and $\CPA$ within the
\texttt{uroc} package for the statistical programming language
\textsf{R} \citep{rcore2021} where the \texttt{animation} package of
\citet{Xie2013} provides functionality for converting \textsf{R}
images into a GIF animation, based on the external software
\texttt{ImageMagick}.  The \texttt{uroc} package can be downloaded
from \url{https://github.com/evwalz/uroc}.  In addition, a Python
\citep{python2021} implementation is available at
\url{https://github.com/evwalz/urocc}.  Returning to the example of
Section \ref{sec:AUC}, Fig.~\ref{fig:ROCM} compares the features $X$,
$X'$ and $X''$ as predictors of the real-valued outcome $Y$ in a joint
display of the three ROC movies and UROC curves, based on the same
sample of size $n = 400$ as in Fig.~\ref{fig:ROCC}.  In the ROC
movies, the threshold $z = 1.00$ recovers the traditional ROC curves
in Fig.~\ref{fig:ROCC}.

\subsection{Universal ROC (UROC) curves}  \label{sec:UROC} 

Next we propose a simple and efficient way of subsuming a ROC movie
for data of the form \eqref{eq:real} into a single, static graphical
display.  As before, let $z_1 < \cdots < z_m$ denote the unique values
of $y_1, \ldots, y_n$, let $n_c = \sum_{i=1}^n \one \{ y_i = z_c \}$,
and let $\ROC_c$ denote the (classical) ROC curve associated with the
binary problem in \eqref{eq:real2binary}, for $c = 1, \ldots, m - 1$.

By Theorem 5 of \citet{Gneiting2018}, there is a natural bijection
between the class of the ROC curves and the class of the cumulative
distribution functions (CDFs) of Borel probability measures on the
unit interval.  In particular, any ROC curve can be associated with a
non-decreasing, right-continuous function $R : [0,1] \to [0,1]$ such
that $R(0) = 0$ and $R(1) = 1$.  Hence, any convex combination of the
ROC curves $\ROC_1, \ldots, \ROC_{m-1}$ can also be associated with a
non-decreasing, right-continuous function on the unit interval.  It is
in this sense that we define the following; in a nutshell, the UROC
curve averages the traditional ROC curves of which the ROC movie is
composed.

\begin{definition}  \label{defn:UROC} 
For data of the form \eqref{eq:real}, the \textit{universal receiver
  operating characteristic}\/ (UROC) \textit{curve} is the curve
associated with the function
\begin{equation}  \label{eq:UROC} 
\sum_{c = 1}^{m-1} w_c \, \ROC_c
\end{equation} 
on the unit interval, with weights  
\begin{equation}  \label{eq:w} 
w_c = \left. \left( \, \sum_{i=1}^c n_i \sum_{i=c+1}^m n_i \right) \right/ 
\left( \sum_{i=1}^{m-1} \sum_{j=i+1}^m ( \hsp j-i) \hsp n_i n_j \right)
\end{equation} 
for $c = 1, \ldots, m - 1$.
\end{definition}

Importantly, the weights in \eqref{eq:w} depend on the data in
\eqref{eq:real} via the outcomes $y_1, \ldots, y_n$ only.  Thus, they
are independent of the feature values and can be used meaningfully in
order to compare and rank features.  Their specific choice is
justified in Theorems \ref{thm:CPA2L} and \ref{thm:CPA2cov} below.
Clearly, the weights are nonnegative and sum to one.  If $m = n$ then
$n_1 = \cdots = n_m = 1$, and \eqref{eq:w} reduces to
\begin{equation}  \label{eq:wnoties} 
w_c = 6 \, \frac{c(n-c)}{n(n^2-1)} \quad {\rm for} \quad c = 1, \ldots, n - 1; 
\end{equation}
so the weights are quadratic in the rank $c$ and symmetric about the
inner most rank(s), at which they attain a maximum.  As we will see,
our choice of weights has the effect that in this setting the area
under the UROC curve, to which we refer as a general coefficient of
predictive ability ($\CPA$), relates linearly to Spearman's rank
correlation coefficient, in the same way that $\AUC$ relates linearly
to Somers' $D$.

In Fig.~\ref{fig:ROCM} the UROC curves appear in the final static
screen, subsequent to the ROC movies.  Within each ROC movie, the
individual frames show the ROC curve $\ROC_c$ for the feature
considered.  Furthermore, we display the threshold $z_c$, the
\textit{relative}\/ weight from \eqref{eq:w} (the actual weight
normalized to the unit interval, i.e., we show $w_c/\max_{l = 1,
	\ldots, m-1} w_l$), and $\AUC_c$, respectively, for $c = 1, \ldots,
m - 1$.  Once more we emphasize that the use of ROC movies, UROC
curves, and $\CPA$ frees researchers from the need to select ---
typically, arbitrary --- threshold values and binarize, as mandated by
classical ROC analysis.

Of course, if specific threshold values are of particular substantive
interest, the respective ROC curves can be extracted from the ROC
movie, and it can be useful to plot $\AUC_c$ versus the associated
threshold value $z_c$.  Displays of this type have been introduced and
studied by \citet{Rosset2005}.

\section{Coefficient of predictive ability ($\CPA$)}  \label{sec:CPA}

We proceed to define the coefficient of predictive ability ($\CPA$) as
a general measure of potential predictive ability, based on notation
introduced in Sections \ref{sec:ROCM} and \ref{sec:UROC}.

\begin{definition}  \label{defn:CPA}
For data of the form \eqref{eq:real} and weights $w_1, \ldots,
w_{m-1}$ as in \eqref{eq:w}, the \textit{coefficient of predictive
  ability} ($\CPA$) is defined as
\begin{equation}  \label{eq:CPA} 
\CPA = \sum_{c=1}^{m-1} w_c \, \AUC_c.   
\end{equation} 
In words, $\CPA$ equals the area under the UROC curve.
\end{definition}

Importantly, ROC movies, UROC curves, and $\CPA$ satisfy a fundamental
requirement on any generalization of ROC curves and $\AUC$, in that
they reduce to the classical notions when applied to a binary problem,
whence $m = 2$ in \eqref{eq:UROC} and \eqref{eq:CPA}, respectively.
Another requirement that we consider essential is that, when both the
feature values $x_1, \ldots, x_n$ and the outcomes $y_1, \ldots, y_n$
are pairwise distinct, the value of a performance measure remains
unchanged if we transpose the roles of the feature and the outcome.
As we will see, this is true under our specific choice \eqref{eq:w} of
the weights $w_c$ in the defining formula \eqref{eq:CPA} for $\CPA$,
but is not true under other choices, such as in the case of equal
weights.

\subsection{Interpretation as a weighted probability}  \label{sec:interpretation} 

We now express $\CPA$ in terms of pairwise comparisons via the
function $s$ in \eqref{eq:s}.  To this end, we usefully change
notation for the data in \eqref{eq:real} and refer to the feature
values in class $c \in \{ 1, \ldots, m \}$ as $x_{ck}$, for $k = 1,
\ldots, n_c$.  Thus, we rewrite \eqref{eq:real} as
\begin{equation}  \label{eq:real2class} 
(x_{11}, z_1), \ldots, (x_{1n_1},z_1), \, \ldots, \, (x_{m1}, z_m),
\ldots, (x_{mn_m},z_m) \, \in \, \real \times \real,
\end{equation} 
where $z_1 < \cdots < z_m$ are the unique values of $y_1, \ldots, y_n$
and $n_c = \sum_{i=1}^n \one \{ y_i = z_c \}$, for $c = 1, \ldots, m$.

\begin{theorem}  \label{thm:CPA2L} 
For data of the form \eqref{eq:real2class}, 
\begin{equation}  \label{eq:CPA2L} 
\CPA = 
\frac{\sum_{i=1}^{m-1} \sum_{j=i+1}^m \sum_{k=1}^{n_i} \sum_{l=1}^{n_j} ( \hsp j-i) \, s(x_{ik},x_{jl})}
{\sum_{i=1}^{m-1} \sum_{j=i+1}^m ( \hsp j-i) \, n_i n_j}.
\end{equation} 
\end{theorem}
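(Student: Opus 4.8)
The plan is to unfold the definition of CPA in \eqref{eq:CPA}, insert the Mann--Whitney--Wilcoxon representation \eqref{eq:MWU} for each $\AUC_c$, and then reorganize the resulting multiple sum by exchanging the order of summation. First I would record what the $c$-th derived binary problem in \eqref{eq:real2binary} looks like in the notation of \eqref{eq:real2class}: its negative class consists of the instances in classes $1, \ldots, c$ and its positive class of those in classes $c+1, \ldots, m$, so that the numbers of negatives and positives are $\sum_{i=1}^c n_i$ and $\sum_{i=c+1}^m n_i$, respectively. Applying \eqref{eq:MWU} to this binary problem then gives
\[
\AUC_c = \frac{1}{\left( \sum_{i=1}^c n_i \right) \left( \sum_{i=c+1}^m n_i \right)} \sum_{i=1}^c \sum_{j=c+1}^m \sum_{k=1}^{n_i} \sum_{l=1}^{n_j} s(x_{ik}, x_{jl}).
\]

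Next I would substitute this, together with the weight $w_c$ from \eqref{eq:w}, into \eqref{eq:CPA}. The decisive simplification is that the product $\left( \sum_{i=1}^c n_i \right) \left( \sum_{i=c+1}^m n_i \right)$ occurs in the numerator of $w_c$ and in the denominator of $\AUC_c$, so that it cancels termwise. Writing $N = \sum_{i=1}^{m-1} \sum_{j=i+1}^m (j-i) \hsp n_i n_j$ for the common normalizing constant appearing in \eqref{eq:w}, this leaves
\[
\CPA = \frac{1}{N} \sum_{c=1}^{m-1} \sum_{i=1}^c \sum_{j=c+1}^m \sum_{k=1}^{n_i} \sum_{l=1}^{n_j} s(x_{ik}, x_{jl}).
\]

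The final step is to interchange the summation over the threshold index $c$ with the summation over the class indices $i$ and $j$. Here the constraints $1 \le i \le c$ and $c+1 \le j \le m$ are equivalent to $i \le c < j$ with $i < j$, so for each fixed pair $(i,j)$ with $1 \le i < j \le m$ the admissible thresholds are exactly $c = i, i+1, \ldots, j-1$, of which there are $j-i$, and the outer range $1 \le c \le m-1$ imposes no further restriction. Hence each inner term $\sum_{k} \sum_{l} s(x_{ik}, x_{jl})$ is counted with multiplicity $j-i$, yielding
\[
\CPA = \frac{1}{N} \sum_{i=1}^{m-1} \sum_{j=i+1}^m (j-i) \sum_{k=1}^{n_i} \sum_{l=1}^{n_j} s(x_{ik}, x_{jl}),
\]
which is \eqref{eq:CPA2L} once $N$ is written out. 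The argument is essentially bookkeeping, and the single point requiring care --- the step I would treat as the main obstacle --- is verifying the multiplicity $j-i$ when swapping the order of summation, since this is precisely what makes the cancellation clean and what pins down the particular choice of weights in \eqref{eq:w} (the same count also confirms that these weights sum to one, as $\sum_{c} ( \sum_{i\le c} n_i )( \sum_{i > c} n_i ) = N$).
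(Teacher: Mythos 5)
Your proposal is correct and follows essentially the same route as the paper's own proof: apply the Mann--Whitney representation \eqref{eq:MWU} to each derived binary problem, exploit the cancellation of $\bigl( \sum_{i=1}^c n_i \bigr) \bigl( \sum_{i=c+1}^m n_i \bigr)$ between the weight $w_c$ and the normalization of $\AUC_c$, and then swap the summation over $c$ with that over the class pair $(i,j)$, where the multiplicity of admissible thresholds is exactly $j-i$. The only difference is presentational: the paper performs this last interchange silently in a single displayed equality, whereas you spell out the counting argument (and note, as a bonus, that the same count shows the weights sum to one).
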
 

\begin{proof} 
By \eqref{eq:MWU}, the individual $\AUC$ values satisfy 
\[
\AUC_c = 
\frac{1}{\sum_{i=1}^c n_i \sum_{i=c+1}^m n_i} 
\sum_{i=1}^c \sum_{j=c+1}^m \sum_{k=1}^{n_i} \sum_{l=1}^{n_j} s(x_{ik},x_{jl})
\]
for $c = 1, \ldots, m - 1$.  In view of \eqref{eq:w} and \eqref{eq:CPA}, summation yields 
\begin{align*}
\CPA 
& = \sum_{c=1}^{m-1} w_c \, \AUC_c \\
& = \frac{\sum_{c=1}^{m-1} \sum_{i=1}^c \sum_{j=c+1}^m \sum_{k=1}^{n_i} \sum_{l=1}^{n_j} s(x_{ik},x_{jl})}{\sum_{i=1}^{m-1} \sum_{j=i+1}^m ( \hsp j-i) \, n_i n_j} \\
& = \frac{\sum_{i=1}^{m-1} \sum_{j=i+1}^m \sum_{k=1}^{n_{i}} \sum_{l=1}^{n_j} ( \hsp j-i) \, s(x_{ik},x_{jl})}{\sum_{i=1}^{m-1} \sum_{j=i+1}^m ( \hsp j-i) \, n_i n_j},
\end{align*} 
as claimed. 
\end{proof} 

Thus, $\CPA$ is based on pairwise comparisons of feature values,
counting the number of concordant pairs in \eqref{eq:real2class},
adjusting to a count of $\frac{1}{2}$ if feature values are tied, and
weighting a pair's contribution by a class based distance, $j - i$,
between the respective outcomes, $z_j > z_i$.  In other words, $\CPA$
equals a weighted probability of concordance, with weights that grow
linearly in the class based distance between outcomes.

The specific form of $\CPA$ in \eqref{eq:CPA2L} invites comparison to
a widely used measure of discrimination in biomedical applications,
namely, the C \textit{index} \citep{Harrell1996, Pencina2004}
\begin{equation}  \label{eq:C} 
{\rm C} = 
\frac{\sum_{i=1}^{m-1} \sum_{j=i+1}^m \sum_{k=1}^{n_{i}} \sum_{l=1}^{n_j} s(x_{ik},x_{jl})}{\sum_{i=1}^{m-1} \sum_{j=i+1}^m n_i n_j}.   
\end{equation} 
If the outcomes are binary, both the C index and $\CPA$ reduce to
$\AUC$.  While $\CPA$ can be interpreted as a weighted probability of
concordance, C admits an interpretation as an unweighted probability,
whence \citet{Mason2009} recommend its use for administrative
purposes.  However, the weighting in \eqref{eq:CPA2L} may be more
meaningful, as concordance between feature--outcome pairs with
outcomes that differ substantially in rank tends to be of greater
practical relevance than concordance between pairs with alike
outcomes.  While $\CPA$ admits the appealing, equivalent
interpretation \eqref{eq:CPA} in terms of binary $\AUC$ values and the
area under the UROC curve, relationships of this type are unavailable
for the C index.

Subject to conditions, the C index relates linearly to Kendall's rank
correlation coefficient \citep{Somers1962, Pencina2004, Mason2009}.
In Section \ref{sec:Spearman} we demonstrate the same type of
relationship for $\CPA$ and Spearman's rank correlation coefficient,
thereby resolving a problem raised by \citet[p.~95]{Heagerty2005}.
Just as the C index bridges and generalizes $\AUC$ and Kendall's
coefficient, $\CPA$ bridges and nests $\AUC$ and Spearman's
coefficient, with the added benefit of appealing interpretations in
terms of the area under the UROC curve and rank based covariances.

\subsection{Representation in terms of covariances}  \label{sec:CPA2cov}

The key result in this section represents $\CPA$ in terms of the
covariance between the class of the outcome and the mid rank of the
feature, relative to the covariance between the class of the outcome
and the mid rank of the outcome itself.

The mid rank method handles ties by assigning the arithmetic average
of the ranks involved \citep{Woodbury1940, Kruskal1958}.  For
instance, if the third to seventh positions in a list are tied, their
shared \textit{mid rank}~is $\frac{1}{5} (3 + 4 + 5 + 6 + 7) = 5$.
This approach treats equal values alike and guarantees that the sum of
the ranks in any tied group is unchanged from the case of no ties.  As
before, if $y_i = z_j$, where $z_1 < \cdots < z_m$ are the unique
values of $y_1, \ldots, y_n$ in \eqref{eq:real}, we say that the
\textit{class}\/ of $y_i$ is $j$.  In brief, we express this as
$\cl(y_i) = j$.  Similarly, we refer to the mid rank of $x_i$ within
$x_1, \ldots, x_n$ as $\mrk(x_i)$.

\begin{theorem}  \label{thm:CPA2cov} 
Let the random vector\/ $(X,Y)$ be drawn from the empirical
distribution of the data in \eqref{eq:real} or \eqref{eq:real2class}.
Then
\begin{equation}  \label{eq:CPA2cov} 
\CPA = \frac{1}{2} \left( \frac{\cov(\cl(Y),\mrk(X))}{\cov(\cl(Y),\mrk(Y))} + 1 \right).
\end{equation}
\end{theorem}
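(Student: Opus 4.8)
The plan is to reduce both covariances in \eqref{eq:CPA2cov} to the weighted pairwise sums that already appear in Theorem \ref{thm:CPA2L}, by combining a single structural identity for mid ranks with an antisymmetrization argument. Throughout, I would let $(X,Y)$ and an independent copy $(X',Y')$ be drawn from the empirical distribution of \eqref{eq:real2class}, so that every expectation is a normalized sum over the data.

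The first step is a mid-rank identity. For any value $x$, the mid rank within $x_1,\ldots,x_n$ satisfies $\mrk(x) = \tfrac12 + \sum_{r=1}^n s(x_r,x)$, which is immediate from the averaging definition of the mid rank and the function $s$ in \eqref{eq:s}: each strictly smaller value contributes $1$ and each tied value contributes $\tfrac12$. Writing $G(x) = \tfrac1n \sum_{r=1}^n s(x_r,x) = \mathbb{E}[s(X',x)]$ for the empirical mid-distribution function, this reads $\mrk(X) = \tfrac12 + n\,G(X)$. Since covariance is affine in each argument, $\cov(\cl(Y),\mrk(X)) = n\,\cov(\cl(Y),G(X))$, and the same holds with $Y$ in place of $X$.

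The second step passes from covariance to pairwise comparisons. Using $\mathbb{E}[G(X)] = \mathbb{E}[s(X',X)] = \tfrac12$, which follows from the symmetry $s(a,b)+s(b,a)=1$, I would expand $\cov(\cl(Y),G(X)) = \mathbb{E}[\cl(Y)\,s(X',X)] - \tfrac12\,\mathbb{E}[\cl(Y)]$ as an average over the two independent draws. The key manipulation is antisymmetrization: because $\sigma(x,x') := s(x',x)-\tfrac12$ is antisymmetric, inside the double sum one may replace the factor $\cl(Y)$ by $\tfrac12(\cl(Y)-\cl(Y'))$. Collapsing the resulting sum over ordered pairs of data points into a sum over class pairs $i<j$, each unordered class pair is counted twice with weight $(j-i)$, and using $\sigma(x_{ik},x_{jl}) = \tfrac12 - s(x_{ik},x_{jl})$ yields
\[
\cov(\cl(Y),\mrk(X)) = \frac1n\Big( \sum_{i<j}(j-i)\sum_{k,l} s(x_{ik},x_{jl}) - \tfrac12 \sum_{i<j}(j-i)\,n_i n_j \Big).
\]
Running the identical computation with $Y$ replacing $X$, and noting that the outcomes are the distinct values $z_1<\cdots<z_m$ so that $s(z_i,z_j)=1$ for $i<j$ with no ties, the first inner sum reduces to $n_i n_j$ and one obtains $\cov(\cl(Y),\mrk(Y)) = \tfrac{1}{2n}\sum_{i<j}(j-i)\,n_i n_j$.

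To conclude, denote the numerator and denominator of \eqref{eq:CPA2L} by $S$ and $D$. The two covariances are then $\tfrac1n(S-\tfrac12 D)$ and $\tfrac{1}{2n}D$, so their ratio equals $(2S-D)/D = 2\,\CPA - 1$ by Theorem \ref{thm:CPA2L}; rearranging gives \eqref{eq:CPA2cov}. I expect the main obstacle to be the antisymmetrization bookkeeping in the second step: one must correctly pass from the unrestricted double sum over all data points, where $X$ may carry ties, to the class-indexed sum with the linear weight $(j-i)$ and the tie-adjusted count $s(x_{ik},x_{jl})$, keeping consistent both the factor of two from pairing $(i,k;j,l)$ with $(j,l;i,k)$ and the sign coming from $\sigma = \tfrac12 - s$. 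Once the mid-rank identity and the mean $\mathbb{E}[G(X)]=\tfrac12$ are in place, the remaining algebra is routine.
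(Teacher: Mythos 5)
Your proof is correct, but it takes a genuinely different route from the paper's. The paper also starts from Theorem \ref{thm:CPA2L}, but it proceeds by brute-force index algebra: it writes the covariance ratio as an explicit ratio of sums (its eq.~\eqref{eq:num.denom}) and then verifies, through two long chains of rearrangements (one for numerators, one for denominators, each handling within-class ties separately via the identity $\sum_{i} \sum_{k,l} i \, s(x_{il},x_{ik}) = \tfrac12 \sum_i i\hsp n_i^2$), that numerator matches numerator and denominator matches denominator. You instead exploit the probabilistic structure: the mid-rank identity $\mrk(x) = \tfrac12 + \sum_{r=1}^n s(x_r,x)$, the exchangeability of two independent draws $(X,Y)$, $(X',Y')$ from the empirical distribution, and the antisymmetry of $\sigma(x,x') = s(x',x) - \tfrac12$, which lets you replace $\cl(Y)$ by $\tfrac12(\cl(Y)-\cl(Y'))$ and collapse directly onto the class-indexed sums $S = \sum_{i<j}(j-i)\sum_{k,l}s(x_{ik},x_{jl})$ and $D = \sum_{i<j}(j-i)\hsp n_i n_j$ of \eqref{eq:CPA2L}; I checked the bookkeeping ($\cov(\cl(Y),\mrk(X)) = \tfrac1n(S-\tfrac12 D)$, $\cov(\cl(Y),\mrk(Y)) = \tfrac1{2n}D$, ratio $= 2\hsp\CPA - 1$) and it is right, including the factor of two from ordered pairs and the vanishing of within-class and diagonal terms. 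What your approach buys is economy and uniformity: the antisymmetrization eliminates the paper's longest computation, and the denominator covariance falls out of the same formula by specializing to $Y$ (where $s(z_i,z_j)=1$ for $i<j$) rather than requiring its own page of algebra. What the paper's approach buys is the standalone identity \eqref{eq:num.denom}, which it reuses later (it underlies the $O(n \log n)$ computation of CPA and the proof of Theorem \ref{thm:Spearman.m}), so if you adopted your argument in the paper you would still want to record that rank-based formula as a corollary.
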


\begin{proof}
Suppose that the law of the random vector\/ $(X,Y)$ is the empirical
distribution of the data in \eqref{eq:real}.  Based on the equivalent
representation in \eqref{eq:real2class}, we find that
\[
\frac{\cov(\cl(Y),\mrk(X))}{\cov(\cl(Y),\mrk(Y))} 
= \frac{\sum_{i=1}^m \sum_{k=1}^{n_i} i \hsp \hsp \mrk(x_{ik}) - \frac{1}{2} \hsp (n+1) \sum_{i=1}^m i \hsp n_i}
{\sum_{i=1}^m i \hsp n_i \left( \sum_{j=0}^{i-1} n_j + \frac{1}{2} (n_i+1) \right) - \frac{1}{2} \hsp (n+1) \sum_{i=1}^m i \hsp n_i}, 
\]
where $n_0 = 0$.  Consequently, we can rewrite \eqref{eq:CPA2cov} as
\begin{equation}  \label{eq:num.denom} 
\CPA = \frac{\sum_{i=1}^m \sum_{k=1}^{n_i} i \hsp \hsp \mrk(x_{ik}) 
	+ \sum_{i=1}^m i \hsp n_i \left( \sum_{j=0}^{i-1} n_j + \frac{1}{2} n_i - n - \frac{1}{2} \right)}
{\sum_{i=1}^m i \hsp n_i \left( 2 \hsp \sum_{j=0}^{i-1} n_j + n_i - n \right)}. 
\end{equation}
	
We proceed to demonstrate that the numerator and denominator in
\eqref{eq:CPA2L} equal the numerator and denominator in
\eqref{eq:num.denom}, respectively.  To this end, we first compare
feature values within classes and note that
\[
\sum_{i=1}^m \sum_{k=1}^{n_i} \sum_{l=1}^{n_i} i \hsp s(x_{il},x_{ik}) 
= \sum_{i=1}^m i \sum_{k=1}^{n_i} \left( n_i - k + \frac{1}{2} \right)
= \frac{1}{2} \sum_{i=1}^m i \hsp n_i^2; 
\]
for if the feature values in class $i$ are all distinct, the largest
one exceeds $n_i - 1$ others, the second largest exceeds $n_i - 2$
others, and so on, and analogously in case of ties.  We now show the
equality of the numerators in \eqref{eq:CPA2L} and
\eqref{eq:num.denom}, in that
\begin{align*}
& \sum_{i=1}^{m-1} \sum_{j=i+1}^m \sum_{k=1}^{n_i} \sum_{l=1}^{n_j} ( \hsp j-i) \, s(x_{ik},x_{jl}) \\
& = \sum_{i=1}^{m-1} \sum_{j=i+1}^m \sum_{k=1}^{n_i} \sum_{l=1}^{n_j} j \hsp s(x_{ik},x_{jl}) 
  - \sum_{i=1}^{m-1} \sum_{j=i+1}^m \sum_{k=1}^{n_i} \sum_{l=1}^{n_j} i \hsp s(x_{ik},x_{jl}) \\ 
& \hspace{20mm} + \sum_{j=1}^{m-1} \sum_{i=j+1}^m \sum_{k=1}^{n_j} \sum_{l=1}^{n_i} j \hsp s(x_{ik},x_{jl})
  - \sum_{j=1}^{m-1} \sum_{i=j+1}^m \sum_{k=1}^{n_j} \sum_{l=1}^{n_i} j \hsp s(x_{ik},x_{jl}) \\
& = \sum_{i=1}^m \sum_{\substack{j=1 \\ j \neq i}}^m \sum_{k=1}^{n_i} \sum_{l=1}^{n_j} j \hsp s(x_{ik},x_{jl}) 
  - \sum_{i=1}^{m-1} \sum_{j=i+1}^m \sum_{k=1}^{n_i} \sum_{l=1}^{n_j} i \left( s(x_{jl},x_{ik}) + s(x_{ik},x_{jl}) \right) \\
\intertext{}
& = \sum_{j=1}^m \sum_{l=1}^{n_j} j \left( \hsp \mrk(x_{jl}) - \frac{1}{2} \right) - \sum_{i=1}^m \sum_{k=1}^{n_i} \sum_{l=1}^{n_i} i \hsp s(x_{il},x_{ik}) 
  - \sum_{i=1}^{m-1} \sum_{j=i+1}^m i \hsp n_i n_j \\
& = \sum_{i=1}^m \sum_{k=1}^{n_i} i \hsp \mrk(x_{ik)} - \frac{1}{2} \sum_{i=1}^m i \hsp n_i - \frac{1}{2} \sum_{i=1}^m i \hsp n_i^2 
  - n \sum_{i=1}^{m-1} i \hsp n_i + \sum_{i=1}^{m-1} i \hsp n_i \sum_{j=0}^i n_j \\
& = \sum_{i=1}^m \sum_{k=1}^{n_i} i \hsp \mrk(x_{ik)} - \frac{1}{2} \sum_{i=1}^m i \hsp n_i - \frac{1}{2} \sum_{i=1}^m i \hsp n_i^2 
  - n \sum_{i=1}^m i \hsp n_i + \sum_{i=1}^m i \hsp n_i \sum_{j=0}^i n_j \\
& = \sum_{i=1}^m \sum_{k=1}^{n_i} i \hsp \mrk(x_{ik)} 
  + \sum_{i=1}^m i \hsp n_i \left( \sum_{j=0}^{i-1} n_j + \frac{1}{2} n_i - n - \frac{1}{2} \right).
\end{align*}
As for the denominators, 
\begin{align*}
& \sum_{i=1}^{m-1} \sum_{j=i+1}^m ( \hsp j-i) \, n_i n_j =
  \sum_{i=1}^{m-1} \sum_{j=i+1}^{m} j \hsp n_i n_j - \sum_{i=1}^{m-1} \sum_{j=i+1}^m i \hsp n_i n_j \\
& \hspace{4mm} = \sum_{i=1}^m i \hsp n_i \sum_{k=0}^{i-1} n_k - n \sum_{i=1}^{m-1} i \hsp n_i + \sum_{i=1}^{m-1} i \hsp n_i \sum_{k=1}^i n_k \\
& \hspace{4mm} = 2 \sum_{i=1}^m i n_i \sum_{k=0}^{i-1} n_k - n \sum_{i=1}^{m-1} i \hsp n_i + \sum_{i=1}^{m-1} i \hsp n_i^2 
  + \sum_{i=1}^{m-1} i \hsp n_i \sum_{k=0}^{i-1} n_k - \sum_{i=1}^m i \hsp n_i \sum_{k=0}^{i-1} n_k \\
& \hspace{4mm} = 2 \sum_{i=1}^m i \hsp n_i \sum_{k=0}^{i-1} n_k - n \sum_{i=1}^{m-1} i \hsp n_i + \sum_{i=1}^{m-1} i \hsp n_i^2 - nmn_m + mn_m^2 \\
& \hspace{4mm} = 2 \sum_{i=1}^m i \hsp n_i \sum_{k=0}^{i-1} n_k - n \sum_{i=1}^m i \hsp n_i + \sum_{i=1}^m i \hsp n_i^2 \\
& \hspace{4mm} = \sum_{i=1}^m i \hsp n_i \left( 2 \hsp \sum_{j=0}^{i-1} n_j + n_i - n \right), 
\end{align*}
whence the proof is complete. 
\end{proof}

Interestingly, the representation \eqref{eq:CPA2cov} in terms of rank
and class based covariances appears to be new even in the special case
when the outcomes are binary, so that $\CPA$ reduces to $\AUC$.  The
representation also sheds new light on the asymmetry of $\CPA$, in
that, in general, the value of $\CPA$ changes if we transpose the
roles of the feature and the outcome.  In contrast to customarily used
measures of bivariate association and dependence, which are
necessarily symmetric \citep{Neslehova2007, Reshef2011, Weihs2018},
$\CPA$ is directed when the outcome is binary or ordinal.  Thus,
$\CPA$ avoids a technical issue with the use of rank-based correlation
coefficients in discrete settings, namely, that perfect classifiers do
not reach the optimal values of the respective performance measures
\citep[p.~565]{Neslehova2007}.  However, in the case of no ties at
all, to which we tend now, $\CPA$ becomes symmetric, as one would
expect, given that the feature and the outcome are on equal footing
then.

\subsection{Relationship to Spearman's rank correlation coefficient}  \label{sec:Spearman}

Spearman's rank correlation coefficient $\rhoS$ for data of the form
\eqref{eq:real} is generally understood as Pearson's correlation
coefficient applied to the respective ranks \citep{Spearman1904}.  In
case there are no ties in either $x_1, \ldots, x_n$ nor $y_1, \ldots,
y_n$, the concept is unambiguous, and Spearman's coefficient can be
computed as
\begin{equation}  \label{eq:def_rho_S} 
\rhoS = 1 - \frac{6}{n(n^2-1)} \sum_{i=1}^n \left( \rk(x_i) - \rk(y_i)
\right)^2,
\end{equation} 
where $\rk(x_i)$ denotes the rank of $x_i$ within $x_1, \ldots, x_n$,
and $\rk(y_i)$ the rank of $y_i$ within $y_1, \ldots, y_n$,

In this setting CPA relates linearly to Spearman's rank correlation
coefficient $\rhoS$, in the very same way that $\AUC$ relates to
Somers' $D$ in \eqref{eq:SomersD}.

\begin{theorem}  \label{thm:Spearman}
In the case of no ties, 
\begin{equation}  \label{eq:rho_S} 
{\rm CPA} = \frac{1}{2} \left( \hsp \rhoS + 1 \right).
\end{equation} 
\end{theorem}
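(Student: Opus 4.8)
The plan is to deduce the result directly from the covariance representation in Theorem~\ref{thm:CPA2cov}, exploiting the simplifications that occur when there are no ties. Recall from \eqref{eq:CPA2cov} that $\CPA = \frac{1}{2}\bigl(\cov(\cl(Y),\mrk(X))/\cov(\cl(Y),\mrk(Y)) + 1\bigr)$, so it suffices to show that the covariance ratio equals $\rho_{\rm S}$.

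First I would record what the no-ties assumption does to the quantities appearing in \eqref{eq:CPA2cov}. Absence of ties in the outcomes means $m = n$ and every class is a singleton, $n_1 = \cdots = n_m = 1$; consequently the class of an outcome coincides with its rank, $\cl(y_i) = \rk(y_i)$. Absence of ties in the feature values means the mid rank equals the ordinary rank throughout, so $\mrk(x_i) = \rk(x_i)$ and $\mrk(y_i) = \rk(y_i)$. Substituting these identities, the covariance ratio in \eqref{eq:CPA2cov} becomes $\cov(\rk(Y),\rk(X))/\cov(\rk(Y),\rk(Y)) = \cov(\rk(X),\rk(Y))/\mathrm{var}(\rk(Y))$.

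The key step is to recognize this ratio as Spearman's coefficient. Since $(X,Y)$ is drawn from the empirical distribution of the data and there are no ties, both $\rk(X)$ and $\rk(Y)$ are uniformly distributed over $\{1,\ldots,n\}$; in particular they share the common mean $(n+1)/2$ and, crucially, the identical variance $\mathrm{var}(\rk(X)) = \mathrm{var}(\rk(Y)) = (n^2-1)/12$. Because the two variances agree, dividing the covariance by $\mathrm{var}(\rk(Y))$ is the same as dividing by $\sqrt{\mathrm{var}(\rk(X))\,\mathrm{var}(\rk(Y))}$, so the ratio equals Pearson's correlation of the two rank vectors, which is exactly Spearman's $\rho_{\rm S}$ by the definition recalled before \eqref{eq:def_rho_S}. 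Feeding $\cov(\cl(Y),\mrk(X))/\cov(\cl(Y),\mrk(Y)) = \rho_{\rm S}$ back into \eqref{eq:CPA2cov} then yields $\CPA = \frac{1}{2}(\rho_{\rm S} + 1)$.

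There is no serious obstacle once Theorem~\ref{thm:CPA2cov} is in hand; the argument is essentially bookkeeping. The only point that warrants care is the equality of the two rank variances, which is precisely what allows one to pass from the asymmetric covariance ratio of \eqref{eq:CPA2cov} to the symmetric correlation coefficient, thereby reconciling the intrinsically directed quantity CPA with the symmetric $\rho_{\rm S}$ in this special case. As a consistency check one could instead derive the identity from the explicit formula \eqref{eq:CPA2L} together with the closed-form weights \eqref{eq:wnoties}, expanding $(\rk(x_i) - \rk(y_i))^2$ in \eqref{eq:def_rho_S} and matching terms, but the covariance route is shorter and more transparent.
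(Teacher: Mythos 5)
Your proof is correct and takes essentially the same route as the paper, which likewise deduces \eqref{eq:rho_S} as a special case of the covariance representation \eqref{eq:CPA2cov} after observing that, absent ties, classes and mid ranks reduce to ordinary ranks. Your explicit variance-equality step (both rank variables being uniform on $\{1,\ldots,n\}$ with variance $(n^2-1)/12$, so the covariance ratio is the Pearson correlation of the ranks) merely fills in what the paper leaves as ``readily identified,'' and your suggested consistency check via \eqref{eq:CPA2L} and \eqref{eq:wnoties} parallels the paper's alternative proof.
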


Indeed, in case there are no ties, both mid ranks and classes reduce
to ranks proper, and then \eqref{eq:rho_S} is readily identified as a
special case of \eqref{eq:CPA2cov}.  For an alternative proof, in the
absence of ties the weights $w_{c}$ in \eqref{eq:w} are of the form
\eqref{eq:wnoties}.  The stated result then follows upon combining the
defining equation \eqref{eq:UROC}, the equality stated at the bottom
of the left column of page 4 in \citet{Rosset2005}, and equation (5)
in the same reference.

Note that $\CPA$ becomes symmetric in this case, as its value remains
unchanged if we transpose the roles of the feature and the outcome.
Furthermore, if the joint distribution of a bivariate random vector
$(X,Y)$ is continuous, and we think of the data in \eqref{eq:real} as
a sample from the respective population, then, by applying Definition
\ref{defn:CPA} and Theorem \ref{thm:Spearman} in the large sample
limit, and taking \eqref{eq:wnoties} into account, we (informally)
obtain a population version of $\CPA$, namely,
\begin{equation}  \label{eq:S2CPA}  
\CPA = 6 \int_0^1 \alpha (1 - \alpha) \, \AUC_\alpha \, \rm{d} \alpha
= \frac{1}{2} \left( \hsp \rhoS + 1 \right),
\end{equation}
where $\AUC_\alpha$ is the population version of $\AUC$ for $(X, \one
\{ Y \geq q_\alpha \})$, with $q_\alpha$ denoting the
$\alpha$-quantile of the marginal law of $Y$.  We defer a rigorous
derivation of \eqref{eq:S2CPA} to future work and stress that, as both
$X$ and $Y$ are continuous here, their roles can be interchanged.

Under the assumption of multivariate normality, the population version
of Spearman's $\rhoS$ relates to Pearson's correlation coefficient $r$
as
\begin{equation}  \label{eq:P2S}
\rhoS = \frac{6}{\pi} \arcsin \frac{r}{2};   
\end{equation}
see, e.g., \citet{Kruskal1958}.  Returning to the example in Section
\ref{sec:AUC}, where $(Y, X, X', X'')$ is jointly Gaussian with
covariance matrix \eqref{eq:MVN}, Table \ref{tab:MVN} states, for each
feature, the population values of Pearson's correlation coefficient
$r$, $\CPA$, and the C index relative to the real-valued outcome $Y$,
as derived from \eqref{eq:S2CPA} and \eqref{eq:P2S} and the respective
relationships for the C index and Kendall's rank correlation
coefficient $\tauK$, namely
\begin{equation}  \label{eq:K2C}  
{\rm C} = \frac{1}{2} \left( \hsp \tauK + 1 \right)
\end{equation}
and
\begin{equation}  \label{eq:P2K}  
\tauK = \frac{2}{\pi} \arcsin r. 
\end{equation}
These results imply that for a bivariate Gaussian population with
Pearson correlation coefficient $r \in (0,1)$ it is true that $\tauK >
\rhoS > 0$ and $\CPA > {\rm C} > 1/2$.  In fact, under positive
dependence it always holds that $\tauK \geq \rhoS \geq 0$, as
demonstrated by \citet{Caperaa1993}, whence $\CPA \geq {\rm C} \geq
1/2$.  However, there are also settings where these inequalities get
violated \citep{Schreyer2017}.  In Fig.~\ref{fig:ROCM} the $\CPA$
values for the features appear along with the UROC curves in the final
static screen, subsequent to the ROC movie.  The empirical values show
the expected approximate agreement with the population quantities in
the table.

\renewcommand{\arraystretch}{1.3}
\begin{table}[t]
\centering
\caption{Population values of Pearson's correlation coefficient $r$,
  $\CPA$, and the C index for the features $X$, $X'$, and $X''$
  relative to the real-valued outcome $Y$, where $(Y, X, X', X'')$ is
  Gaussian with covariance matrix \eqref{eq:MVN}.  \label{tab:MVN}}
\begin{tabular}{cccc}
\toprule 
Feature & $r$ & $\CPA$ & C \\
\toprule 
$X$   & 0.800 & 0.893 & 0.795 \\
$X'$  & 0.500 & 0.741 & 0.667 \\
$X''$ & 0.200 & 0.596 & 0.564 \\
\bottomrule 
\end{tabular} 
\end{table} 

Suppose now that the values $y_1, \ldots, y_n$ of the outcomes are
unique, whereas the feature values $x_1, \ldots, x_n$ might involves
ties.  Let $p \geq 0$ denote the number of tied groups within $x_1,
\ldots, x_n$.  If $p = 0$ let $V = 0$.  If $p \geq 1$, let $v_j$ be
the number of equal values in the $j$th group, for $j = 1, \ldots, p$,
and let
\[
V = \frac{1}{12} \sum_{j=1}^p \left( v_j^3 - v_j \right). 
\]
Then Spearman's \textit{mid rank adjusted}\/ coefficient
$\rhoM$ is defined as
\begin{equation}  \label{eq:def_rho_M}
\rhoM = 1 - \frac{6}{n(n^2-1)} \left( \sum_{i=1}^n \left( \, \mrk(x_i) - \rk(y_i) \right)^2 + V \right),
\end{equation}
where $\mrk$ is the aforementioned mid rank.  As shown by
\citet{Woodbury1940}, if one assigns all possible combinations of
integer ranks within tied sets, computes Spearman's $\rhoS$ in
\eqref{eq:def_rho_S} on every such combination and averages over the
respective values, one obtains the formula for $\rhoM$ in
\eqref{eq:def_rho_M}.

The following result reduces to the statement of Theorem
\ref{thm:Spearman} in the case $p = 0$ when there are no ties in $x_1,
\ldots, x_n$ either.

\begin{theorem}  \label{thm:Spearman.m}
In case there are no ties within\/ $y_1, \ldots, y_n$, 
\begin{equation}  \label{eq:rho_M}
{\rm CPA} = \frac{1}{2} \left( \hsp \rhoM + 1 \right).
\end{equation} 
\end{theorem}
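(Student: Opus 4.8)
The plan is to specialize the covariance representation of Theorem~\ref{thm:CPA2cov}, exploiting that the absence of ties among $y_1, \ldots, y_n$ forces $m = n$, with every class a singleton. First I would note that in this setting $\cl(y_i) = \rk(y_i)$, and that, there being no ties in the outcomes, the mid rank of $y_i$ coincides with its ordinary rank, so that $\cl(Y) = \mrk(Y) = \rk(Y)$ as functions on the empirical distribution of \eqref{eq:real}. By contrast $\mrk(X)$ remains a genuine mid rank, since the features may be tied.

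Substituting into \eqref{eq:CPA2cov}, the denominator collapses to $\cov(\rk(Y),\rk(Y)) = \mathrm{var}(\rk(Y)) = (n^2-1)/12$, because $\rk(Y)$ is a permutation of $1, \ldots, n$, while the numerator becomes $\cov(\rk(Y),\mrk(X))$. Thus the claim \eqref{eq:rho_M} reduces to the single identity
\begin{equation*}
\frac{\cov(\rk(Y),\mrk(X))}{\mathrm{var}(\rk(Y))} = \rho_{\rm M}.
\end{equation*}
To prove it I would expand the covariance via $2ab = a^2 + b^2 - (a-b)^2$ with $a = \rk(y_i)$ and $b = \mrk(x_i)$, using that $\rk(Y)$ and $\mrk(X)$ share the mean $(n+1)/2$, as the mid rank preserves the rank sum.

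The step I expect to be the main obstacle is the mid rank sum-of-squares identity
\begin{equation*}
\sum_{i=1}^n \mrk(x_i)^2 = \sum_{i=1}^n \rk(y_i)^2 - V,
\end{equation*}
which I would obtain by summing over each tied group of size $v_j$ the deficit $\frac{1}{12}(v_j^3 - v_j)$ incurred when the $v_j$ consecutive distinct ranks are replaced by their common average; this is precisely where the correction $V$ from \eqref{eq:def_rho_M} arises. Feeding this together with $\sum_i \rk(y_i)^2 = n(n+1)(2n+1)/6$ into the expansion turns the ratio into $1 - \frac{6}{n(n^2-1)}\bigl(\sum_{i=1}^n (\mrk(x_i) - \rk(y_i))^2 + V\bigr)$, which is $\rho_{\rm M}$ by definition, and \eqref{eq:rho_M} follows. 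The remaining manipulations --- chiefly verifying the constant $(n+1)(2n+1)/6 - (n+1)^2/4 = (n^2-1)/12$ --- are routine bookkeeping once the $V$ identity is in place.
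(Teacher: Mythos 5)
Your proof is correct, but it takes a genuinely different route from the paper's. The paper's proof is an averaging argument: it invokes Woodbury's characterization of $\rho_{\rm M}$ as the average of $\rho_{\rm S}$ over all assignments of integer ranks within tied sets, notes that the representation \eqref{eq:num.denom} is linear in the mid ranks $\mrk(x_i)$ (so that CPA of the original data equals the average, over tie-breaking assignments, of CPA of the tie-broken data), and then applies Theorem \ref{thm:Spearman} to each tie-broken dataset. You instead specialize the covariance representation \eqref{eq:CPA2cov} directly --- using $\cl(Y) = \mrk(Y) = \rk(Y)$ when the outcomes are distinct --- and verify the resulting identity $\cov(\rk(Y),\mrk(X))/\mathrm{var}(\rk(Y)) = \rho_{\rm M}$ by brute algebra. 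Your pivotal lemma, the per-group deficit $\frac{1}{12}(v_j^3 - v_j)$ in the sum of squared mid ranks, checks out: for a group of size $v$ occupying ranks $r+1, \ldots, r+v$, the squared distinct ranks sum to $vr^2 + rv(v+1) + \frac{1}{6}v(v+1)(2v+1)$ while the squared mid ranks sum to $vr^2 + rv(v+1) + \frac{1}{4}v(v+1)^2$, and the difference is exactly $\frac{1}{12}(v^3-v)$; summing over groups gives $\sum_i \mrk(x_i)^2 = \frac{1}{6}n(n+1)(2n+1) - V$, and your polarization identity then reproduces the defining formula \eqref{eq:def_rho_M}. The trade-off: the paper's proof is shorter and conceptually illuminating (it explains $V$ as an artifact of averaging over tie-breaks, leaning on Woodbury's result as a black box), whereas yours is self-contained --- you effectively re-derive the computational core of Woodbury's theorem --- and avoids the mildly delicate point that the averaging argument requires linearity of \eqref{eq:num.denom} in the mid ranks.
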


\begin{proof}
As noted, $\rhoM$ arises from $\rhoS$ if one assigns all possible
combinations of integer ranks within tied sets, computes $\rhoS$ on
every such combination and averages over the respective values.  In
view of \eqref{eq:num.denom}, if there are no ties in $y_1, \ldots,
y_n$, averaging $\frac{1}{2} \left( \hsp \rhoS + 1 \right)$ over the
combinations yields $\frac{1}{2} \left( \hsp \rhoM + 1 \right)$, which
equals CPA by \eqref{eq:CPA2cov}.
\end{proof} 

The relationships \eqref{eq:SomersD}, \eqref{eq:rho_S} and
\eqref{eq:rho_M} constitute but special cases of the general,
covariance based representation \eqref{eq:CPA2cov}.  In this light,
CPA provides a unified way of quantifying potential predictive ability
for the full gamut of dichotomous, categorical, mixed
discrete-continuous and continuous types of outcomes.  In particular,
CPA bridges and generalizes $\AUC$, Somers' $D$ and Spearman's rank
correlation coefficient, up to a common linear relationship.

\subsection{Comparison of CPA to the C index and related measures}  \label{sec:comparison} 

We proceed to a more detailed comparison of the CPA measure
\eqref{eq:CPA} to the C index \eqref{eq:C} and measures studied by
\citet{Waegeman2008}.\footnote{We denote the measures $\widehat{U}$,
  $\widehat{U}_{\scriptscriptstyle \rm pairs}$,
  $\widehat{U}_{\scriptscriptstyle \rm ovo}$, and
  $\widehat{U}_{\scriptscriptstyle \rm cons}$ in equations (8), (16),
  (17), and (18) of \citet{Waegeman2008} by $U$, $\Upairs$, $\Uovo$,
  and $\Ucons$, respectively.}  As noted, both CPA and the C index are
rank-based, reduce to $\AUC$ when the outcome is binary, and become
symmetric when both the features and the outcomes are pairwise
distinct.  We relax these conditions slightly and restrict attention
to measures that use ranks only, reduce to $\AUC$ when the outcome is
binary \textit{and}\/ there are no ties in the feature values, and
become symmetric when there are no ties at all.  This excludes
measures based on the receiver error characteristic
\citep[REC,][]{Bi2003} and the regression receiver operating
characteristic \citep[RROC,][]{Orallo2013} curve, which are neither
rank based nor reduce to $\AUC$.  The $\Ucons$ measure of
\citet{Waegeman2008} averages consecutive $\AUC$ values in the same
fashion as $\CPA$ in \eqref{eq:CPA}, but uses constant weights, as
opposed to the class dependent weights \eqref{eq:w} for $\CPA$, and
does not become symmetric when there are no ties at all.\footnote{To
  see that $\Ucons$ does not become symmetric when there are no ties
  in $x_1, \ldots, x_n$ nor $y_1, \ldots, y_n$, consider a dataset of
  size $n \geq 4$, where $y_1 < \cdots < y_n$ and $x_3 < x_1 < x_2 <
  x_4 < \cdots < x_n$.  Then $\AUC_1 = (n-3)/(n-1)$, $\AUC_2 =
  (2n-5)/(2n-4)$, and $\AUC_c = 1$ for $c = 3, \ldots, n - 1$, whereas
  if we interchange the roles of the feature and the outcome, then
  $\AUC_1 = (n-2)/(n-1)$, $\AUC_2 = (2n-6)/(2n-4)$, and $\AUC_c = 1$
  for $c = 3, \ldots, n - 1$, resulting in distinct unweighted sums.}
The $\Upairs$ and $\Uovo$ measures of \citet{Waegeman2008} satisfy our
criteria, relate closely to the C index, and in the simulation setting
of Fig.~\ref{fig:CPA_Cindex} it holds that $\Uovo = \Upairs = {\rm
  C}$.\footnote{The $\Upairs$ measure corresponds to a performance
  criterion proposed by \citet[][equation (7.11)]{Herbrich2000} and
  equals the proportion of correctly ranked pairs of instances.
  Except for the treatment of ties in the feature, $\Upairs$ equals
  the C index.  In particular, if the feature values are pairwise
  distinct then $\Upairs = {\rm C}$.  The measure $\Uovo$ represents
  the \citet{Hand2001} approach of averaging the $\binom{m}{2}$
  one-versus-one $\AUC$ values in an $m$-class problem.  It has been
  compared to $\Upairs$ by \citet{Waegeman2008} and relates to the C
  index as well.  In particular, if the feature values are pairwise
  distinct and the dataset furthermore is balanced with class
  memberships $n_1 = \cdots = n_m$, as in the simulation setting that
  we report on in Fig.~\ref{fig:CPA_Cindex}, then $\Uovo = \Upairs =
  {\rm C}$.}

\begin{figure}[t]
\centering
\includegraphics[width = 0.85 \textwidth]{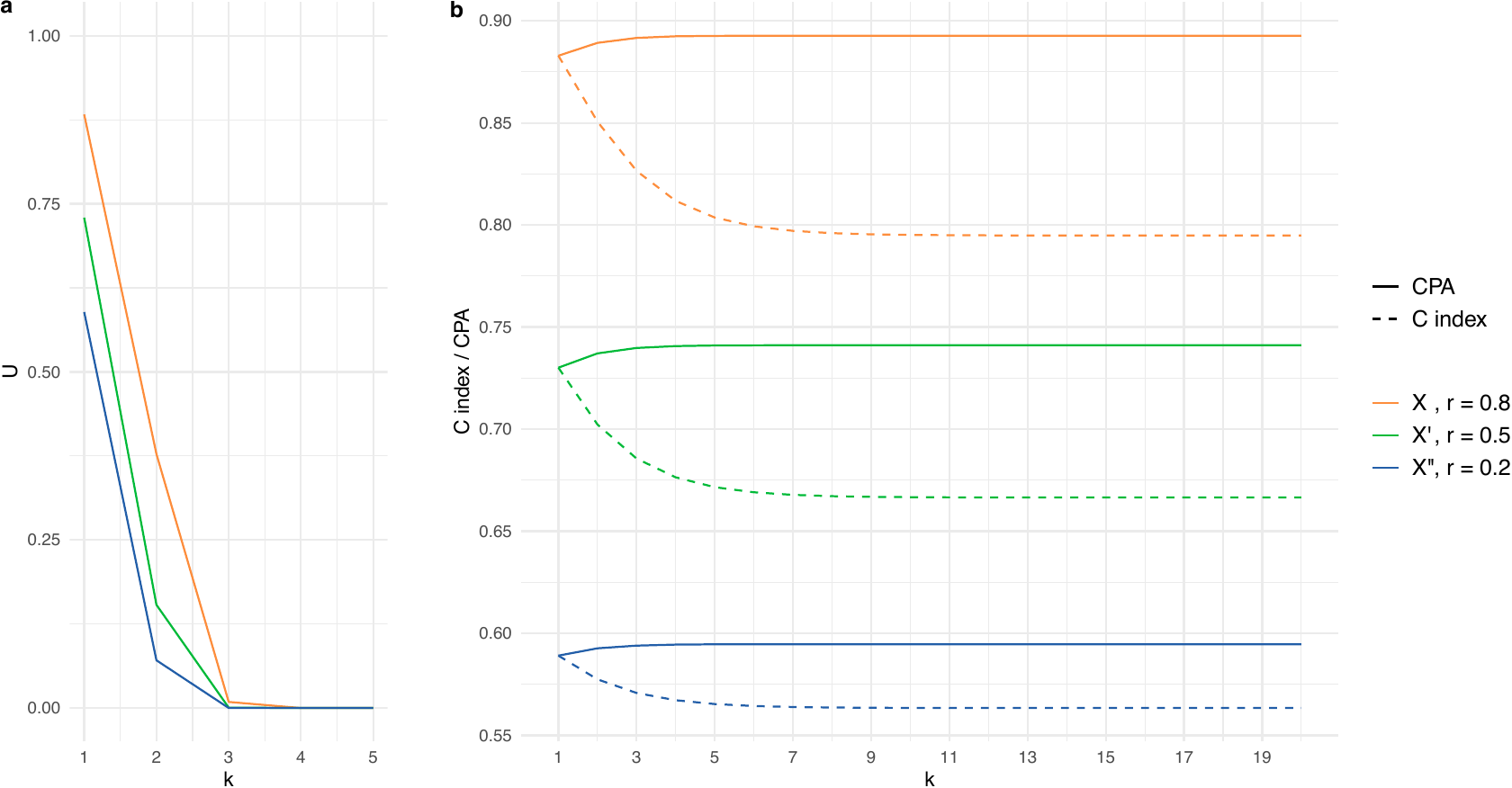}
\caption{Rank based performance measures for the features $X$, $X'$
  and $X''$ as predictors of the real-valued outcome $Y$ in the
  simulation example of Section \ref{sec:AUC}, with Pearson
  correlation coefficient $r = 0.8$, $0.5$ and $0.2$, respectively,
  based on a sample of size $n = 2^{20}$.  We discretize the
  continuous outcome into $2^k$ consecutive blocks of size $2^{20-k}$
  each, and plot (a) $U$, and (b) $\CPA$ and the C index as functions
  of the discretization level $k = 1, \ldots, 20$.  Note that $k = 1$
  yields a binary outcome and $k = 20$ a continuous
  outcome.  \label{fig:CPA_Cindex}}
\end{figure}

In view of the above requirements and properties, we restrict the
subsequent comparison to $\CPA$, the C index, and the $U$ measure
introduced by \citet{Waegeman2008}.  For a dataset with $m$ classes
$U$ equals the proportion of sequences of $m$ instances, one of each
class, that align correctly with the feature values.  As noted, these
measures are rank based and reduce to $\AUC$ when the outcome is
binary and there are no ties in the feature values.  In the continuous
case with no ties in the feature values nor in the outcomes, they
become symmetric, $U$ attains the value 1 under a perfect ranking and
the value 0 otherwise, ${\rm C} = \frac{1}{2} \left( 1 - \tauK
\right)$, and $\CPA = \frac{1}{2} \left( 1 - \rhoS \right)$.

In Fig.~\ref{fig:CPA_Cindex} we report on a simulation experiment
where we draw samples of $2^{20}$ instances from the joint Gaussian
distribution of the random vector $(Y, X, X', X'')$ with covariance
matrix \eqref{eq:MVN}, so that the features have Pearson correlation
coefficient $r = 0.8$, $0.5$, and $0.2$ with the continuous outcome
$Y$.  By discretizing the outcome into $2^k$ consecutive blocks of
size $2^{20-k}$ each, where $k = 1, \ldots, 20$, and computing $\CPA$,
the C index and the $U$ measure as a function of $k$, all
discretization levels are considered, ranging from a binary variable
for $k = 1$ to continuous outcomes for $k = 20$.  When $k = 1$ the
three measures coincide and equal $\AUC$, essentially at the
population value of
\begin{equation}  \label{eq:AUC1/2} 
\AUC_{1/2} = \frac{2}{\pi} \arcsin \frac{r}{\sqrt{2}} + \frac{1}{2},  
\end{equation} 
in the sense stated subsequent to \eqref{eq:S2CPA}.  The $U$ measure
is tailored to ordinal outcomes with a few classes only and
degenerates rapidly with $k$.  When $k = 20$, $\CPA$ and the C index
are rescaled versions of Spearman's $\rhoS$ and Kendall's $\tauK$,
essentially at the population values in Table \ref{tab:MVN}.

Throughout, the measures lie in between their common value for $k =
1$, which equals $\AUC$, and the respective values for $k = 20$.  For
all features and all $k > 1$, the C index is smaller than $\CPA$, and
$\CPA$ varies considerably less with the discretization level than the
C index.  To supplement these experiments with an analytic
demonstration, suppose that $X$ and $Y$ are bivariate Gaussian with
nonnegative Pearson correlation $r$.  If we convert $Y$ to a balanced
binary outcome, then both $\CPA$ and the C index reduce to a common
value, namely, $\AUC_{1/2}$ in \eqref{eq:AUC1/2}.  As a function of
$r$, the ratio of the C index for the continuous vs.~the balanced
binary outcome attains values between 0.8996 and 1, whereas for $\CPA$
the respective ratio remains between 1 and 1.0156, as illustrated in
Fig.~\ref{fig:ratio}.  These findings along with results in
\citet{Caperaa1993} and \citet{Schreyer2017} suggest that, quite
generally, $\CPA$ and the C index yield qualitatively similar results
in practice, with $\CPA$ being less sensitive to quantization effects,
and the value of $\CPA$ typically being larger than for the C index.

\begin{figure}[t]
\centering
\includegraphics[width = 0.80 \textwidth]{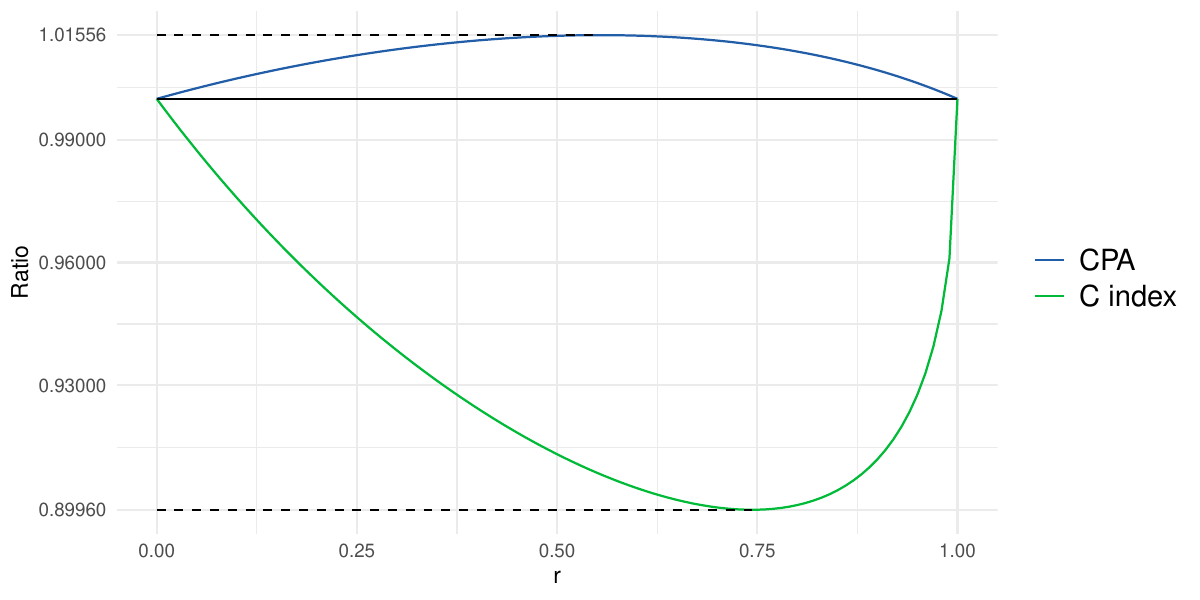}
\caption{Ratio of $\CPA$ (blue curve) respectively the C index (green
  curve) for the feature $X$ as a predictor of the continuous outcome
  $Y$ over $\AUC$ for $X$ and the balanced binary outcome $\one \{ Y
  \geq 0 \}$, where $X$ and $Y$ are bivariate Gaussian with Pearson
  correlation $r \in [0,1]$.  The solid horizontal line is at a ratio
  of 1, which is attained when $r = 0$ and $r = 1$. \label{fig:ratio}}
\end{figure}

\subsection{Computational issues}  

We turn to a discussion of the computational costs of generalized ROC
analysis for a dataset of the form \eqref{eq:real} or
\eqref{eq:real2class} with $n$ instances and $m \leq n$ classes.

It is well known that a traditional ROC curve can be generated from a
dataset with $n$ instances in $O(n \log n)$ operations
\citep[Algorithm 1]{Fawcett2006}.  A ROC movie comprises $m - 1$
traditional ROC curves, so in a na\"{i}ve approach, ROC movies can be
computed in $O(mn \log n)$ operations.  However, our implementation
takes advantage of recursive relations between consecutive component
curves ROC$_{i-1}$ and ROC$_i$.  While a formal analysis will need to
be left to future work, we believe that our algorithm has
computational costs of $O(n \log n)$ operations only.  If the number
$m$ of unique values of the outcome is large, then for all practical
purposes the ROC movie can be shown at a modest number $m_0$ of
distinct values only, at a computational cost of $O(m_0 n \log n)$
operations.  For example, in the setting of
Fig.~\ref{fig:ROCM_HRES_PST} in the meteorological case study in
Section \ref{sec:NWP} there are $m = 35,993$ unique values of the
outcome, whereas the ROC movie uses $m_0 = 401$ frames only.  For the
vertical averaging of the component curves in the construction of UROC
curves, we partition the unit interval into 1,000 equally sized
subintervals.

Importantly, $\CPA$ can be computed in $O(n \log n)$ operations,
without any need to invoke ROC analysis, by sorting $x_1, \ldots, x_n$
and $y_1, \ldots, y_n$, computing the respective mid ranks and
classes, and plugging into the rank based representation
\eqref{eq:num.denom}.  Similarly, there are algorithms for the
computation of the C index in $O(n \log n)$ operations
\citep{Knight1966, Christensen2005}.

\subsection{Key properties: Comparison to traditional ROC analysis}  \label{sec:summary}

We are now in a position to judge whether the proposed toolbox of ROC
movies, UROC curves, and $\CPA$ constitutes a proper generalization of
traditional ROC analysis.  To facilitate the assessment, the
subsequent statements admit immediate comparison with the key insights
of classical ROC analysis, as summarized in Section
\ref{sec:initialsummary}.

We start with the trivial but important observation that the new tools
nest the notions of traditional ROC analysis.  This is not to be taken
for granted, as extant generalizations do not necessarily share this
property.
\begin{itemize} 
\item[(0)] In the case of a binary outcome, both the ROC movie and the
  UROC curve reduce to the ROC curve, and $\CPA$ reduces to $\AUC$.
\item[(1)] ROC movies, the UROC curve and $\CPA$ are straightforward
  to compute and interpret, in the (rough) sense of \textit{the larger
    the better}.
\item[(2)] $\CPA$ attains values between 0 and 1 and relates linearly
  to the covariance between the class of the outcome and the mid rank
  of the feature, relative to the covariance between the class and the
  mid rank of the outcome.  In particular, if the outcomes are
  pairwise distinct, then $\CPA = \frac{1}{2} \left( \hsp \rho_{\rm M}
  + 1 \right)$, where $\rho_{\rm M}$ is Spearman's mid rank adjusted
  coefficient \eqref{eq:def_rho_M}.  If the outcomes are binary, then
  $\CPA = \frac{1}{2} \left( D + 1 \right)$ in terms of Somers' $D$.
  For a perfect feature, $\CPA = 1$, $\rho_{\rm M} = 1$ under pairwise
  distinct and $D = 1$ under binary outcomes.  For a feature that is
  independent of the outcome, $\CPA = \frac{1}{2}$, $\rho_{\rm M} = 0$
  under pairwise distinct and $D = 0$ under binary outcomes.
\item[(3)] The numerical value of $\CPA$ admits an interpretation as a
  weighted probability of concordance for feature--outcome pairs, with
  weights that grow linearly in the class based distance between
  outcomes.
\item[(4)] ROC movies, UROC curves, and $\CPA$ are purely rank based
  and, therefore, invariant under strictly increasing transformations.
  Specifically, if $\phi : \real \to \real$ and $\psi : \real \to
  \real$ are strictly increasing, then the ROC movie, UROC curve, and
  $\CPA$ computed from
  \begin{equation}  \label{eq:realT} 
  (\phi(x_1),\psi(y_1)), \ldots, (\phi(x_n),\psi(y_n)) \, \in \, \real \times \real
  \end{equation} 
  are the same as the ROC movie, UROC curve, and $\CPA$ computed from
  the data in \eqref{eq:real}.
\end{itemize} 
We iterate and emphasize that, as an immediate consequence of the
final property, ROC movies, UROC curves, and $\CPA$ assess the
discrimination ability or \textit{potential}\/ predictive ability of a
point forecast, regression output, feature, marker, or test.  Markedly
different techniques are called for if one seeks to assess a
forecast's \textit{actual}\/ value in any given applied problem
\citep{Bouallegue2015, Ehm2016}.

\section{Real data examples}  \label{sec:examples}

In the following examples from survival analysis and numerical weather
prediction the usage of ROC movies, UROC curves, and $\CPA$ is
demonstrated.  We start by returning to the survival example from
Section \ref{sec:introduction}, where the new set of tools frees
researchers form the need to artificially binarize the outcome.  Then
the use of $\CPA$ is highlighted in a study of recent progress in
numerical weather prediction (NWP), and in a comparison of the
predictive performance of NWP models and convolutional neural
networks.

\subsection{Survival data from Mayo Clinic trial}  \label{sec:survival}

In the introduction, Figs.~\ref{fig:ROCC_survival} and
\ref{fig:ROCM_survival} serve to illustrate and contrast traditional
ROC curves, ROC movies and UROC curves.  They are based on a classical
dataset from a Mayo Clinic trial on primary biliary cirrhosis (PBC), a
chronic fatal disease of the liver, that was conducted between 1974
and 1984 \citep{Dickson1989}.  The data are provided by various
\textsf{R} packages, such as \texttt{SMPracticals} and
\texttt{survival}, and have been analyzed in textbooks
\citep{Fleming1991, Davison2003}.  The outcome of interest is survival
time past entry into the study.  Patients were randomly assigned to
either a placebo or treatment with the drug D-penicillamine.  However,
extant analyses do not show treatment effects \citep{Dickson1989}, and
so we follow previous practice and study treatment and placebo groups
jointly.

We consider two biochemical markers, namely, serum albumin and serum
bilirubin concentration in mg/dl, for which higher and lower levels,
respectively, are known to be indicative of earlier disease stages,
thus supporting survival.  Hence, for the purposes of ROC analysis we
reverse the orientation of the serum bilirubin values.  Given our goal
of illustration, we avoid complications and remove patient records
with censored survival times, to obtain a dataset with $n = 161$
patient records and $m = 156$ unique survival times.  The proper
handling of censoring is beyond the scope of our study, and we leave
this task to subsequent work.  For a discussion and comparison of
extant approaches to handling censored data in the context of
time-dependent ROC curves see \citet{Blanche2013}.

The traditional ROC curves in Fig.~\ref{fig:ROCC_survival} are
obtained by binarizing survival time at a threshold of 1462 days,
which is the survival time in the data record that gets closest to
four years.  The ROC movies and UROC curves in
Fig.~\ref{fig:ROCM_survival} are generated directly from the survival
times, without any need to artificially pick a threshold.  The $\CPA$
values for serum albumin and serum bilirubin are $0.73$ and $0.77$,
respectively, and contrary to the ranking in
Fig.~\ref{fig:ROCC_survival}, where bilirubin was deemed superior,
based on outcomes that were artificially made binary.  Our tools free
researchers from the need to binarize, and still they allow for an
assessment at the binary level, if desired.  For example, the ROC
curves and $\AUC$ values from Fig.~\ref{fig:ROCC_survival} appear in
the ROC movie at a threshold value of 1462 days.  In line with current
uses of $\AUC$ in a gamut of applied settings, $\CPA$ is particularly
well suited to the purposes of feature screening and variable
selection in statistical and machine learning models
\citep{Guyon2003}.  Here, $\AUC$ and $\CPA$ demonstrate that both
albumin and bilirubin contribute to prognostic models for survival
\citep{Dickson1989,Fleming1991}.

\begin{figure}[t] 
\centering
\includegraphics[width = \textwidth]{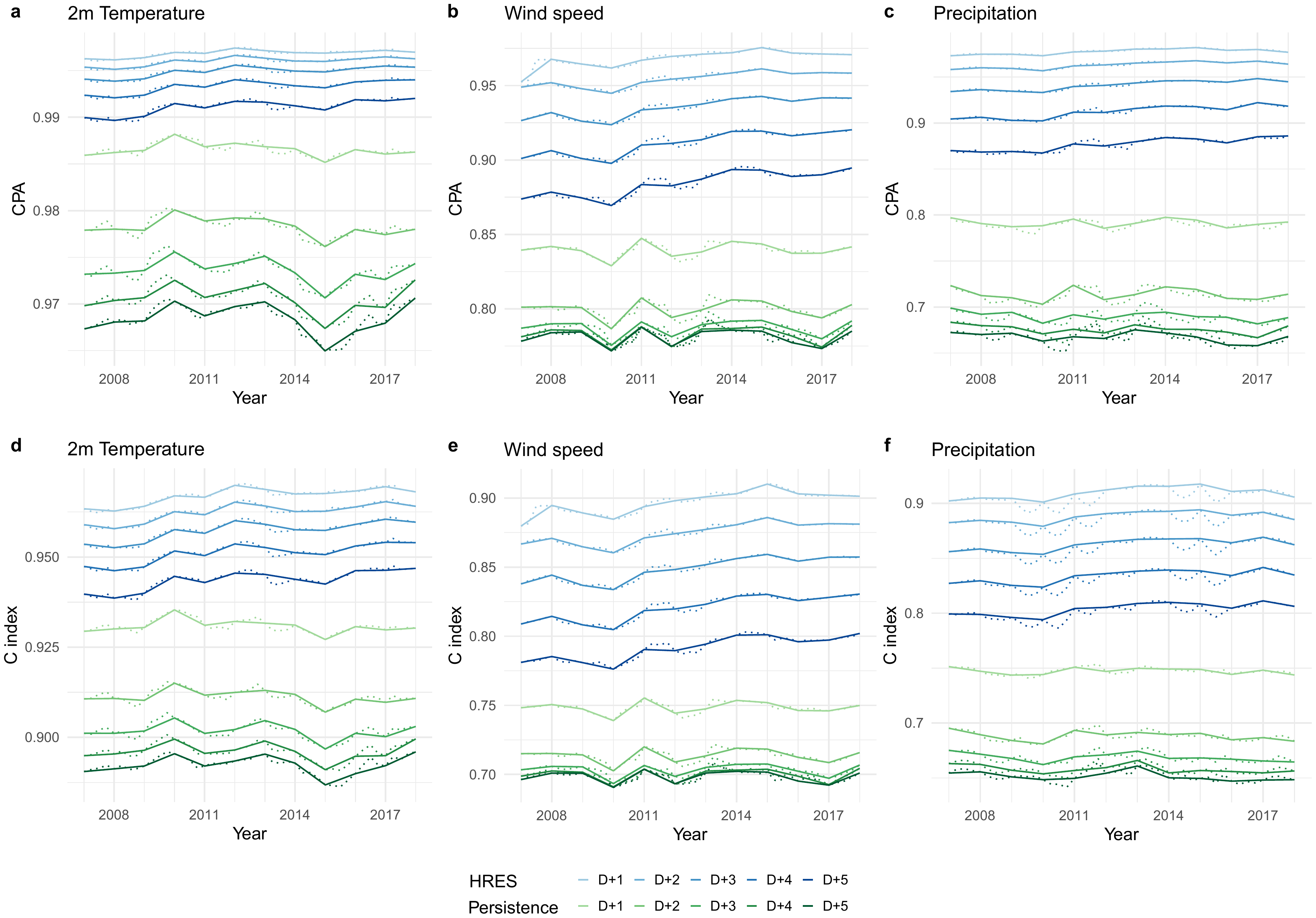}
\caption{Temporal evolution of $\CPA$ and the C index for forecasts
  from the ECMWF high-resolution model at lead times of one to five
  days in comparison to the simplistic persistence forecast in terms
  of $\CPA$ (a, b, c) and the C index (d, e, f).  The weather
  variables considered are (a, d) surface (2-meter) temperature, (b,
  e) surface wind speed and (c, f) 24-hour precipitation accumulation.
  The measures refer to a domain that covers Europe and twelve-month
  periods that correspond to January--December (solid and dotted
  lines), April--March, July--June and October--September (dotted
  lines only), based on gridded forecast and observational data from
  January 2007 through December 2018.  \label{fig:CPA_HRES_PST}}
\end{figure}

\begin{figure}[t]
\centering
\animategraphics[label = myAnim7, height = 75mm]{28}{figures/Test4_reduced3}{}{}
\caption{ROC movies, UROC curves, and $\CPA$ for ECMWF high-resolution
  (HRES) and persistence forecasts of 24-hour precipitation
  accumulation over Europe at a lead time of five days in calendar
  year 2018.  In the ROC movies, the number at upper left shows the
  threshold at hand in the unit of millimeter, the number at upper
  center the relative weight $w_c/\max_{l = 1, \ldots, m-1} w_l$ from
  \eqref{eq:w}, and the numbers at bottom right the respective $\AUC$
  values.  \label{fig:ROCM_HRES_PST}}
\end{figure}

\subsection{Monitoring progress in numerical weather prediction (NWP)}  \label{sec:NWP}

Here we illustrate the usage of $\CPA$ in the assessment of recent
progress in numerical weather prediction (NWP), which has experienced
tremendous advance over the past few decades \citep{Bauer2015,
	Alley2019, Bouallegue2019}.  Specifically, we consider forecasts of
surface (2-meter) temperature, surface (10-meter) wind speed and
24-hour precipitation accumulation initialized at 00:00 UTC at lead
times from a single day (24 hours) to five days (120 hours) ahead from
the high-resolution model operated by the European Centre for
Medium-Range Weather Forecasts \citep{ecmwf2012}, which is generally
considered the leading global NWP model.  The forecast data are
available at \url{https://confluence.ecmwf.int/display/TIGGE}.  As
observational reference we take the ERA5 reanalysis product
\citep{Hersbach2018}.  We use forecasts and observations from $279
\times 199 = 55,521$ model grid boxes of size $0.25^\circ \times
0.25^\circ$ each in a geographic region that covers Europe from
25.0$^\circ$ W to 44.5$^\circ$ E in latitude and 25.0$^\circ$ N to
74.5$^\circ$ N in longitude.  The time period considered ranges from
January 2007 to December 2018.

In Fig.~\ref{fig:CPA_HRES_PST} we apply $\CPA$ and the C index to
compare forecasts from the ECMWF high-resolution run to a reference
technique, namely, the persistence forecast.  The persistence forecast
is simply the most recent available observation for the weather
quantity of interest; as such, the forecast value does not depend on
the lead time.  $\CPA$ and the C index are computed on rolling
twelve-month periods that correspond to January--December,
April--March, July--June or October--September, typically comprising
$n = 365 \times 55,521 = 20,265,165$ individual forecast cases.  The
ECMWF forecast has considerably higher $\CPA$ and C index than the
persistence forecast for all lead times and variables considered.  For
the persistence forecast the measures fluctuate around a constant
level; for the ECMWF forecast they improve steadily, attesting to
continuing progress in NWP \citep{Bauer2015, Alley2019,
  Bouallegue2019, Haiden2021}.

To place these findings further into context, recall that $\CPA$ is a
weighted average of $\AUC$ values for binarized outcomes at individual
threshold values, as have been used for performance monitoring by
weather centers \citep{Bouallegue2019, Haiden2021}.  The $\CPA$
measure preserves the spirit and power of classical ROC analysis, and
frees researchers from the need to binarize real-valued outcomes.
Results in terms of the C index are qualitatively similar, with the
numerical value of $\CPA$ being higher than for the C index.

The ROC movies, UROC curves, and $\CPA$ values in
Fig.~\ref{fig:ROCM_HRES_PST} compare the ECMWF high-resolution
forecast to the persistence forecast for 24-hour precipitation
accumulation at a lead time of five days in calendar year 2018.  As
noted, this record comprises more than 20 million individual forecast
cases, and there are $m = 35,993$ unique values of the outcome.  We
certainly lack the patience to watch the full sequence of $m - 1$
screens in the ROC movie.  A pragmatic solution is to consider a
subset $\cC \subseteq \{ 1, \ldots, m - 1 \}$ of indices, so that
$\ROC_c$ is included in the ROC movie (if and) only if $c \in \cC$.
Specifically, we set positive integer parameters $a \leq m - 1$ and
$b$ such that the ROC movie comprises at least $a$ and at most $a + b$
curves.  Let the integer $s$ be defined such that $1 + (a - 1) s \leq
m - 1$ $< 1 + as$, and let $\cC_a = \{ 1, 1 + s, \ldots, 1 + (a-1)s
\}$, so that $|\cC_a| = a$.  Let $\cC_b = \{ c : n_c \geq n/b \}$;
evidently, $|\cC_b| \leq b$.  Finally, let $\cC = \cC_a \cup \cC_b$ so
that $a \leq |\cC| \leq a + b$.  We have made good experiences with
choices of $a = 400$ and $b = 100$, which in
Fig.~\ref{fig:ROCM_HRES_PST} yield a ROC movie with 401 screens.

\subsection{WeatherBench: Convolutional neural networks (CNNs) vs.~NWP models}  \label{sec:WeatherBench} 

As noted, operational weather forecasts are based on the output of
global NWP models that represent the physics of the atmosphere.
However, the grid resolution of NWP models remains limited due to
finite computing resources \citep{Bauer2015}.  Spurred by the ever
increasing popularity and successes of machine learning models,
alternative, data-driven approaches are in vigorous development, with
convolutional neural networks \citep[CNNs;][]{LeCun2015} being a
particularly attractive starting point, due to their ease of
adaptation to spatio-temporal data.  \citet{Rasp2020} introduce
WeatherBench, a ready-to use benchmark dataset for the comparison of
data-driven approaches, such as CNNs and a classical linear regression
(LR) based technique, to NWP models, such as the aforementioned HRES
model and simplified versions thereof, T63 and T42, which run at
successively coarser resolutions.  Furthermore, WeatherBench supplies
baseline methods, including both the persistence forecast and
climatological forecasts.

As evaluation measure for the various types of point forecasts,
WeatherBench uses the root mean squared error (RMSE).  In related
studies, the RMSE is accompanied by the anomaly correlation
coefficient (ACC), i.e., the normalized product moment between the
difference of the forecast at hand and the climatological forecast,
and the difference between the outcome and the climatological forecast
\citep{Weyn2020}.  However, as noted by \citet{Rasp2020}, results in
terms of RMSE and ACC tend to be very similar.  Here we argue that a
rank based measure, such as $\CPA$ or the C index, would be a more
suitable companion measure to RMSE than ACC.

\begin{figure}[t]
\centering
\includegraphics[width = \textwidth]{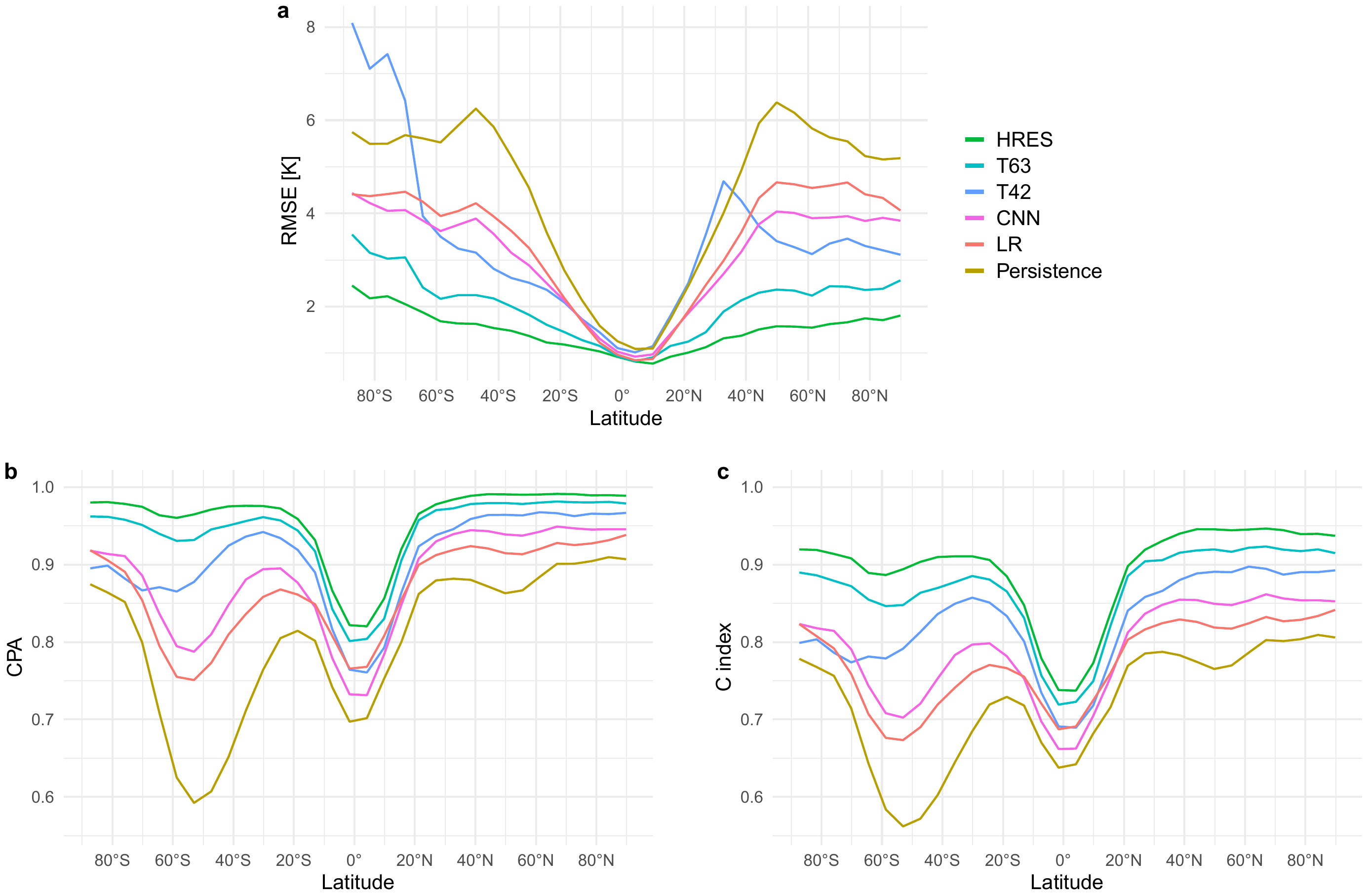}
\caption{Predictive ability of WeatherBench three days ahead forecasts
  of 850 hPa temperature in 2017 and 2018 at different latitudes in
  terms of (a) RMSE, (b) $\CPA$, and (c) the C index.  HRES, T63, and
  T42 indicate NWP models run at decreasing grid resolution that are
  compared to the CNN, linear regression (LR), and persistence
  forecasts \citep{Rasp2020}.  Note that RMSE is negatively oriented
  (the smaller the better), whereas $\CPA$ and the C index are
  positively oriented.  \label{fig:RMSE_CPA}}
\end{figure}        

Figure \ref{fig:RMSE_CPA} compares WeatherBench forecasts three days
ahead for temperature at 850 hPa pressure, which is at around 1.5 km
height, in terms of RMSE (in Kelvin), $\CPA$, and the C index.  With
reference to Table 2 of \citet{Rasp2020}, we consider the persistence
forecast, the (direct) linear regression (LR) forecast, the (direct)
CNN forecast, the Operational IFS (HRES) forecast, and successively
coarser versions thereof (T63 and T42).  The panels display the
performance measures as functions of latitude bands, from the South
Pole at 90$^\circ$S to the equator at 0$^\circ$ and the North Pole at
90$^\circ$N, for the WeatherBench final evaluation period of the years
2017 and 2018.  The measures are initially computed grid cell by grid
cell, and then averaged across the grid cells in a latitude band,
which is compatible with the latitude based weighting that is employed
in WeatherBench.  Note that RMSE is negatively oriented (the smaller,
the better), whereas the rank based measures are positively oriented
(the closer to the ideal value of 1 the better).

With respect to RMSE (Fig.~\ref{fig:RMSE_CPA}a) marked geographical
differences are visible.  In equatorial regions, where day-to-day
temperature variations are generally low, all forecasts have a low
RMSE and the range between the best-performing HRES forecast and the
simplistic persistence forecast is small.  The HRES forecast remains
best for all latitudes, followed by the T63 forecast.  The coarsest
dynamical model forecast, T42, shows a further deterioration as
expected, but with large outliers in the high latitudes of the
southern hemisphere and in the 30s of the northern hemisphere.  It is
likely that the lack of model orography creates large errors in areas
of high terrain such as the Antarctic plateau and the Himalayas.
Among the data-driven forecasts, CNN is better than LR for all
extratropical latitudes.  Finally, persistence performs worst through
all latitudes with prominent peaks near 50$^\circ$S and 50$^\circ$N.
These are the midlatitude storm track regions, where day-to-day
changes are large and impede good forecasts based on persistence.

\begin{figure}[t]
\centering 
\includegraphics[width = 0.90 \textwidth]{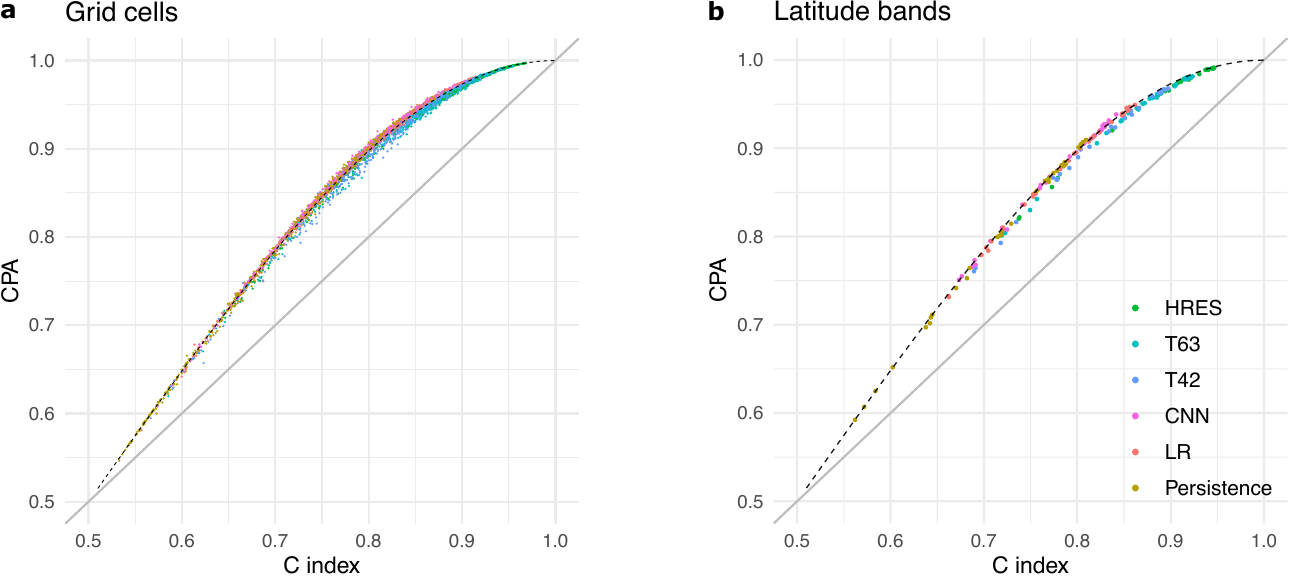}
\caption{Comparison of $\CPA$ and the C index for WeatherBench three
  days ahead forecasts of 850 hPa temperature in 2017 and 2018.  The
  points in the scatterplots of $\CPA$ vs.~the C index correspond to
  (a) measures for individual grid cells and (b) averages of measures
  over latitude bands.  The dashed curves show the theoretical
  relationship between $\CPA$ and the C index in bivariate Gaussian
  populations.  \label{fig:scatterplots}}
\end{figure}        

The corresponding results in terms of $\CPA$ and the C index
(Fig.~\ref{fig:RMSE_CPA}b--c) resemble each other, but show remarkable
differences to the RMSE based analysis.  Most notable are their low
values in the tropics, which indicate poor performance of all
forecasts, well in line with recent findings in meteorology
\citep{Kniffka2020}.  In contrast, the low RMSE suggests superior
performance in this region.  The rank based measures are independent
of magnitude and thus provide a scale free assessment of
predictability.  Another striking difference to RMSE is the large drop
in the Furious Fifties of the southern hemisphere, creating a large
asymmetry with the northern midlatitudes.  This area is almost
entirely oceanic and characterized by mobile low-pressure systems, the
dynamical behaviour of which appears to be difficult to learn under
data-driven approaches.

In Fig.~\ref{fig:scatterplots} we compare $\CPA$ and the C index, both
for individual grid cells and for measures that have been averaged
over latitude bands.  The scatterplots illustrate the findings from
Sections \ref{sec:Spearman} and \ref{sec:comparison}, in that the
value of $\CPA$ throughout is larger than for the C index, in
remarkably close agreement with the respective theoretical
relationship under the assumption of bivariate Gaussianity.

We conclude that RMSE and the rank based measures bring orthogonal
facets of predictive performance to researchers' attention, and
encourage the usage of of $\CPA$ or the C index to supplement RMSE as
key performance measures in WeatherBench.  While ACC is scale free as
well, it is moment based rather than rank based, and thus is more
closely aligned with RMSE than a rank based measure.  Similar
recommendations apply in many practical settings, where predictions of
a real-valued outcome are evaluated, and a magnitude dependent
measure, such as RMSE, is usefully accompanied by a rank based
criterion of predictive performance.  In the special case of
probabilistic classifiers for binary outcomes, this corresponds to
reporting both the Brier mean squared error measure and $\AUC$.  See
\citet{Orallo2012} for a detailed, theoretically oriented comparison
of these and other performance measures under binary outcomes.

\section{Discussion}  \label{sec:discussion}

We have addressed a long-standing challenge in data analytics, by
introducing a set of tools --- comprising receiver operating
characteristic (ROC) movies, universal ROC (UROC) curves, and a
coefficient of predictive ability ($\CPA$) measure --- for generalized
ROC analysis, thereby freeing researchers from the need to
artificially binarize real-valued outcomes, which often is associated
with undesirable effects \citep{Altman2006}.  Throughout the paper, we
have assumed that predictors and features are linearly ordered,
thereby covering binary, ordinal, and continuous outcomes
simultaneously.  While our motivating example uses data from a
clinical trial, our approach does not account for censored data, as
typically encountered in survival analysis.  We strongly encourage
extensions of ROC movies, UROC curves and $\CPA$ that apply to
censored data, perhaps along the lines of \citet{Blanche2013}.  For
generalizations of ROC analysis to multi-class problems with
categorical outcomes that cannot be linearly ordered see
\citet{Hand2001}, \citet{Ferri2003}, and Section 9 of
\citet{Fawcett2006}.

ROC movies, UROC curves, and $\CPA$ reduce to the classical ROC curve
and $\AUC$ when applied to binary data.  Moreover, attractive
properties of ROC curves, such as invariance under strictly increasing
transformations and straightforward interpretability are maintained by
ROC movies and UROC curves.  In contrast to customarily used measures
of bivariate association and dependence \citep{Reshef2011, Weihs2018},
$\CPA$ is asymmetric, i.e., in general, its value changes if the roles
of the feature and the outcome are transposed.  However, when both the
feature and the outcome are continuous, $\CPA$ becomes symmetric, and
relates linearly to Spearman's rank correlation coefficient.  Thus,
$\CPA$ bridges and generalizes $\AUC$, Somers' $D$ and Spearman's rank
correlation coefficient, up to a linear relationship, just like the C
index connects and generalizes $\AUC$, Somers' $D$ and Kendall's rank
correlation coefficient.  While in typical practice the two measures
yield qualitatively similar results, under positive dependence $\CPA$
is larger than the C index, and $\CPA$ tends to be less affected by
discretization effects.

In view of the advent of dynamic graphics in mainstream scientific
publishing, we contend that ROC movies, UROC curves, and $\CPA$ are
bound to supersede traditional ROC curves and $\AUC$ in a wealth of
applications.  Open source code for their implementation in Python
\citep{python2021} and the \textsf{R} language and environment for
statistical computing \citep{rcore2021} is available on GitHub at
\url{https://github.com/evwalz/urocc} and
\url{https://github.com/evwalz/uroc}.

\section*{Acknowledgements}

We thank three anonymous referees, Zied Ben Bouall\`egue, Timo
Dimitriadis, Andreas Eberl, Dominic Edelmann, Andreas Fink, Alexander
I.~Jordan, Peter Knippertz, Sebastian Lerch, Marlon Maranan, Florian
Pappenberger, Johannes Resin, David Richardson, Peter Sanders, Johanna
F.~Ziegel, Philipp Zschenderlein and seminar participants at the
European Centre for Medium-Range Weather Forecasts (ECMWF) and
International Symposium on Forecasting (ISF) for advice, discussion
and encouragement.  In particular, Peter Knippertz provided detailed
comments on the WeatherBench example.  This work has been supported by
the Klaus Tschira Foundation, by the Helmholtz Association, and by the
Deutsche Forschungs\-ge\-mein\-schaft (DFG, German Research
Foundation) -- Project-ID 257899354 -- TRR 165.  Tilmann Gneiting
furthermore acknowledges travel support via the ECMWF Fellowship
programme.


\begin{thebibliography}{}
	
\bibitem[Adams and Hands(1999)]{Adams1999} Adams, N.~M., \& Hands,
  D.~J.~(1999). Comparing classifiers when the misallocation costs are
  uncertain. \textit{Pattern Recognit.}~\textbf{32}, 1139--1147.
	
\bibitem[Alley et al.(2019)]{Alley2019} Alley, R.~B., Emanuel, K.~A.,
  \& Zhang, F.~(2019). Advances in weather
  prediction. \textit{Science}~\textbf{363}, 342--344.
	
\bibitem[Altman and Royston(2006)]{Altman2006} Altman, D.~G., \&
  Royston, P.~(2006). The cost of dichotomising continuous
  variables. \textit{Brit.~Med.~J.}~\textbf{332}, 1080.
	
\bibitem[Bauer et al.(2015)]{Bauer2015} Bauer, P., Thorpe, A., \&
  Brunet, G.~(2015). The quiet revolution of numerical weather
  prediction. \textit{Nature} \textbf{525}, 47--55.
	
\bibitem[Ben Bouall\`egue et al.(2015)]{Bouallegue2015} Ben
  Bouall\`egue, Z., Pinson, P., \& Friederichs, P.~(2015). Quantile
  forecast discrimination and value.
  \textit{Q.~J.~R.~Meteorol.~Soc.}~{\bf 141}, 3415--3424.
	
\bibitem[Ben Bouall\`egue et al.(2019)]{Bouallegue2019} Ben
  Bouall\`egue, Z., Magnusson, L., Haiden, T., \& Richardson,
  D.~S.~(2019). Monitoring trends in ensemble forecast performance
  focusing on surface variables and high-impact
  events. \textit{Q.~J.~R.~Meteorol.~Soc.}~\textbf{145}, 1741--1755.
	
\bibitem[Bi and Bennett(2003)]{Bi2003} Bi, J., \& Bennett,
  K.~P.~(2003).  ``Regression error characteristic curves''. In
  \textit{Proceedings of the Twentieth International Conference on
    Machine Learning (ICML 2003)} (AAAI Press).
	
\bibitem[Blanche et al.(2013)]{Blanche2013} Blanche, P., Dartigues,
  J.-F., \& Jacqmin-Gatta, H.~(2013). Review and comparison of ROC
  curve estimators for a time-dependent outcome with marker-dependent
  censoring. \textit{Biometr.~J.}~\textbf{55}, 687--704.
	
\bibitem[Bradley(1997)]{Bradley1997} Bradley, A.~P.~(1997). The use of
  the area under the ROC curve in the evaluation of machine learning
  algorithms. \textit{Pattern Recognit.}~\textbf{30}, 1145--1159.
	
\bibitem[Cap\'era\`{a} and Genest(1993)]{Caperaa1993} Cap\'era\`{a},
  P., \& Genest, C.~(1993). Spearman's $\rho$ is larger than Kendall's
  $\tau$ for positively dependent random
  variables. \textit{Nonparametr.~Stat.}~\textbf{2}, 183--194.
	
\bibitem[Christensen(2005)]{Christensen2005} Christensen,
  D.~(2005). Fast algorithms for the calculation of Kendall's
  $\tau$. \textit{Comput.~Stat.}~\textbf{20}, 51--62.
	
\bibitem[Davison(2003)]{Davison2003} Davison,
  A.~C.~(2003). \textit{Statistical Models} (Cambridge University
  Press).
	
\bibitem[Dickson et al.(1989)]{Dickson1989} Dickson, E.~R., Grambsch,
  P.~M., Fleming, T.~R., Fischer, L.~D., \& Langworthy,
  A.~(1989). Prognosis in primary biliary cirrhosis: Model for
  decision making. \textit{Hepatology} \textbf{10}, 1--7.
	
\bibitem[ECMWF Directorate(2012)]{ecmwf2012} ECMWF
  Directorate~(2012). Describing ECMWF's forecasts and forecasting
  system. \textit{EC\-MWF Newsl.}~\textbf{133}, 11--13.
	
\bibitem[Ehm et al.(2016)]{Ehm2016} Ehm, W., Gneiting, T., Jordan, A.,
  \& Kr\"uger, F.~(2016). Of quantiles and expectiles: Consistent
  scoring functions, Choquet representations and forecast rankings
  (with discussion and rejoinder). \textit{J.~R.~Stat.~Soc.~Series B
    Stat.~Methodol.}~\textbf{78}, 505--562.
	
\bibitem[Etzioni et al.(1999)]{Etzioni1999} Etzioni, R., Pepe, M.,
  Longton, G., Hu, C., \& Goodman, G.~(1999). Incorporating the time
  dimension in receiver operating characteristic curves: A case study
  of prostate cancer. \textit{Med.~Decis.~Mak.}~\textbf{19}, 242--251.
	
\bibitem[Fawcett(2006)]{Fawcett2006} Fawcett, T.~(2006). An
  introduction to ROC analysis. \textit{Pattern
    Recognit.~Lett.}~\textbf{27}, 861--874.
	
\bibitem[Ferri et al.(2003)]{Ferri2003} Ferri, C., Hern\'andez-Orallo,
  J., \& Salido, M.~A.~(2003). ``Volume under the ROC surface for
  multi-class problems''. In \textit{Proceedings of the 14th European
    Conference on Machine Learning}, Lavra\u{c}, N.~et al.,
  eds.~(Springer), pp.~108--120.
	
\bibitem[Flach(2016)]{Flach2016} Flach, P.~A.~(2016). ``ROC
  analysis''.  In \textit{Encyclopedia of Machine Learning and Data
    Mining}\/ (Springer).
	
\bibitem[Fleming and Harrington(1991)]{Fleming1991} Fleming, T.~R., \&
  Harrington, D.~P.~(1991). \textit{Counting Processes and Survival
    Analysis}\/ (Wiley).
	
\bibitem[Gneiting and Vogel(2018)]{Gneiting2018} Gneiting, T., \&
  Vogel, P.~(2018). Receiver operating characteristic (ROC) curves,
  \url{https://arxiv.org/abs/1809.04808}.
	
\bibitem[Guyon and Elisseeff(2003)]{Guyon2003} Guyon, I., \&
  Elisseeff, A.~(2003). An introduction to variable and feature
  selection.  \textit{J.~Mach.~Learn. Res.}~\textbf{3}, 1157--1182.
	
\bibitem[Haiden et al.(2021)]{Haiden2021} Haiden, T., Janousek, M.,
  Vitart, F., Ben Bouallegue, Z., Ferranti, L., Prates, F., \&
  Richardson, D.~(2021). Evaluation of ECMWF forecasts, including the
  2020 upgrade,
  \url{https://www.ecmwf.int/sites/default/files/elibrary/2021/19879-evaluation-ecmwf-forecasts-including-2020-upgrade.pdf}
	
\bibitem[Hand and Till(2001)]{Hand2001} Hand, D.~J., \&
  Till,~R.~J.~(2001). A simple generalization of the area under the
  ROC curve to multiple class classification
  problems. \textit{Mach.~Learn.}~\textbf{45}, 171--186.
	
\bibitem[Hanley and McNeil(1982)]{Hanley1982} Hanley, J.~A., \&
  McNeil, B.~J.~(1982). The meaning and use of the area under a
  receiver operating characteristic (ROC) curve.
  \textit{Radiology}~\textbf{143}, 29--36.
	
\bibitem[Harrell et al.(1996)]{Harrell1996} Harrell, F.~E.~Jr., Lee,
  K.~L., \& Mark, D.~B.~(1996). Tutorials in biostatistics:
  Multivariable prognostic models: Issues in developing models,
  evaluating assumptions and adequacy, and measuring and reducing
  errors. \textit{Stat.~Med.}~\textbf{15}, 361--387.
	
\bibitem[Heagerty et al.(2000)]{Heagerty2000} Heagerty P.~J., Lumley
  T., \& Pepe M.~S.~(2000). Time-dependent ROC curves for censored
  survival data and a diagnostic
  marker. \textit{Biometrics}~\textbf{56}, 337--344.
	
\bibitem[Heagerty and Zheng(2005)]{Heagerty2005} Heagerty, P.~J., \&
  Zheng, Y.~(2005). Survival model predictive accuracy and ROC
  curves. \textit{Biometrics}~\textbf{61}, 92--105.
	
\bibitem[Herbrich(2000)]{Herbrich2000} Herbrich, R., Graepel, T., \&
  Obermayer, K.~(2000). ``Large margin rank boundaries for ordinal
  regression''. In \textit{Advances in Large Margin Classifiers},
  Smola, A.~J., Bartlett, P.~L., Sch\"olkopf, B., Schuurmans, D.,
  eds.~(MIT Press), pp.~115--132.
	
\bibitem[Hern{\'a}ndez-Orallo(2013)]{Orallo2013} Hern{\'a}ndez-Orallo
  J.~(2013). ROC curves for regression. \textit{Pattern
    Recognit.}~\textbf{46}, 3395--3411.
	
\bibitem[Hern{\'a}ndez-Orallo et al.(2012)]{Orallo2012}
  Hern{\'a}ndez-Orallo J., Flach, P., \& Ferri, C.~(2012). A unified
  view of performance metrics: Translating threshold choice into
  expected classification. \textit{J.~Mach.~Learn. Res.}~\textbf{13},
  2813--2869.
	
\bibitem[Hersbach et al.(2018)]{Hersbach2018} Hersbach, H., Bell, B.,
  Berrisford, P., Biavati, G., Hor\'{a}nyi, A., Mu$\tilde{\text{n}}$oz
  Sabater, J., Nicolas, J., Peubey, C., Radu, R., Rozum, I., Schepers,
  D., Simmons, A., Soci, C., Dee, D., Th\'{e}paut, J.-N.~(2018).  ERA5
  hourly data on single levels from 1979 to present. Copernicus
  Climate Change Service (C3S) Climate Data Store (CDS),
  \url{https://doi.org/10.24381/cds.adbb2d47}.
	
\bibitem[Huang and Ling(2005)]{Huang2005} Huang, J., \& Ling,
  C. X.~(2005). Using AUC and accuracy in evaluating learning
  algorithms. \textit{IEEE Trans.~Knowl.~Data Eng.}~\textbf{17},
  299--310.
	
\bibitem[Kniffka et al.(2020)]{Kniffka2020} Kniffka, A., Knippertz,
  P., Fink A.~H., Benedett, A., Brooks, M.~E., Hill, P.~G., Maranan,
  M., Pante, G., \& Vogel, B.~(2020). An evaluation of operational and
  research weather forecasts for southern West Africa using
  observations from the DACCIWA field campaign in June--July
  2016. \textit{Q.~J.~R.~Meteorol.~Soc.}~\textbf{146}, 1121--1148.
	
\bibitem[Knight(1966)]{Knight1966} Knight, W.~R.~(1966). A computer
  method for calculating Kendall's tau with ungrouped
  data. \textit{J.~Am.~Stat.~Assoc.}~\textbf{61}, 436--439.
	
\bibitem[Kruskal(1958)]{Kruskal1958} Kruskal, W.~H.~(1958). Ordinal
  measures of association. \textit{J.~Am.~Stat.~Assoc.}~\textbf{53},
  814--861.
	
\bibitem[LeCun et al.(2015)]{LeCun2015} LeCun, Y., Bengio, Y., \&
  Hinton, G.~(2015). Deep learning. \textit{Nature} \textbf{521},
  436--444.
	
\bibitem[Mason and Weigel(2009)]{Mason2009} Mason, S.~J., \& Weigel,
  A.~P.~(2009). A generic forecast verification framework for
  administrative purposes. \textit{Mon.~Wea.~Rev.}~\textbf{137},
  331--349.
	
\bibitem[Ne\v{s}lehov\'{a}(2007)]{Neslehova2007} Ne\v{s}lehov\'{a},
  J.~(2007). On rank correlation measures for non-continuous random
  variables. \textit{J.~Multivar.~Anal.}~\textbf{98}, 544--567.
	
\bibitem[Pencina and D'Agostino(2004)]{Pencina2004} Pencina, M.~J., \&
  D'Agostino, R.~B.~(2004). Overall $C$ as a measure of discrimination
  in survival analysis: Model specific population value and confidence
  interval estimation. \textit{Stat.~Med.}~\textbf{22}, 2109--2123.
	
\bibitem[Pepe(2003)]{Pepe2003} Pepe, M.~S.~(2003). \textit{The
  Statistical Evaluation of Medical Tests for Classification and
  Prediction}\/ (Oxford University Press).
	
\bibitem[Python(2021)]{python2021} Python Software
  Foundation~(2021). Python language reference,
  \url{http://www.python.org}.
	
\bibitem[\textsf{R} Core Team(2021)]{rcore2021} \textsf{R} Core
  Team~(2021). \textsf{R}: A language and environment for statistical
  computing. \textsf{R} Foundation for Statistical Computing, Vienna,
  Austria, \url{https://www.r-project.org/}.
	
\bibitem[Rasp et al.(2020)]{Rasp2020}Rasp, S., Dueben, P.D., Scher,
  S., Weyn, J.A., Mouatadid, S., \& Thuerey N.~(2020). WeatherBench: A
  benchmark dataset for data-driven weather
  forecasting. \textit{J.~Adv.~Model.~Earth Syst.}~\textbf{12},
  e2020MS002203.
	
\bibitem[Reshef et al.(2011)]{Reshef2011} Reshef, D.~N., Reshef,
  Y.~A., Finucane, H.~K., Grossman, S.~R., McVean, G., Turnbaugh,
  P.~J., Lander, E.~S., Mitzenmacher, M., \& Sabeti,
  P.~C.~(2011). Detecting novel associations in large data
  sets. \textit{Science}~\textbf{334}, 1518--1524.
	
\bibitem[Rosset et al.(2005)]{Rosset2005} Rosset, S., Perlich, C., \&
  Zadrozny, B.~(2005). ``Ranking-based evaluation of regression
  models''. In \textit{Proceedings of the Fifth IEEE International
    Conference on Data Mining (ICDM'05)} (IEEE).
	
\bibitem[Schreyer et al(2017)]{Schreyer2017} Schreyer, M. L., Paulin,
  R., \& Trutschnig, W.~(2017). On the exact region determined by
  Kendall's $\tau$ and Spearman's
  $\rho$. \textit{J.~R.~Stat.~Soc.~Series B
    Stat.~Methodol.}~\textbf{79}, 613--633.
	
\bibitem[Somers(1962)]{Somers1962} Somers, R.~H.~(1962). A new
  asymmetric measure of association for ordinal
  variables. \textit{Am.~Sociol.~Rev.}~\textbf{27}, 799--811.
	
\bibitem[Spearman(1904)]{Spearman1904} Spearman C.~(1904). The proof
  and measurement of association between two things.
  \textit{Am.~J.~Psychol.}~\textbf{15}, 72--101.
	
\bibitem[Swets(1988)]{Swets1988} Swets, J.~A.~(1988). Measuring the
  accuracy of diagnostic systems. \textit{Science}~\textbf{240},
  1285--1293.
	
\bibitem[Waegeman et al.(2008)]{Waegeman2008} Waegeman, W., De Bets,
  B., \& Boullart, L.~(2008). ROC analysis in ordinary regression
  learning. \textit{Pattern Recognit.~Lett.}~\textbf{29}, 1--9.
	
\bibitem[Weyn et al.(2020)]{Weyn2020} Weyn, J. A., Durran, D. R., \&
  Caruana, R.~(2020). Improving data-driven global weather prediction
  using deep convolutional networks on a cubed
  sphere. \textit{J.~Adv.~Model.~Earth Syst.}~\textbf{12},
    e2020MS002109.
	
\bibitem[Weihs et al.(2018)]{Weihs2018} Weihs, L., Drton, M., \&
  Meinshausen, N.~(2018). Symmetric rank covariances: A generalized
  framework for nonparametric measures of
  dependence. \textit{Biometrika} \textbf{105}, 547--562.
	
\bibitem[Wilks(2019)]{Wilks2019} Wilks,
  D.~S.~(2019). \textit{Statistical Methods in the Atmospheric
    Sciences}, fourth edition (Elsevier).
	
\bibitem[Woodbury(1940)]{Woodbury1940} Woodbury, M.~A.~(1940). Rank
  correlation when there are equal variates.
  \textit{Ann.~Math.~Stat.}~\textbf{11}, 358--362.
	
\bibitem[Xie(2013)]{Xie2013} Xie, Y.~(2013). animation, an \textsf{R}
  package for creating animations and demonstrating statistical
  methods. \textit{J.~Stat.~Softw.}~\textbf{53}.
	
\end{thebibliography}
\end{document}